\documentclass[12pt]{article}

\usepackage{microtype}
\usepackage{graphicx}
\usepackage{subfigure}
\usepackage{booktabs} 
\usepackage{cleveref}

\usepackage{hyperref}

\usepackage{amsmath}
\usepackage{amsfonts}
\usepackage{dsfont}
\usepackage{amssymb,amsthm,commath}
\usepackage{amsmath, amsthm, amssymb}
\usepackage{bm,xfrac,xcolor}
\usepackage{nccmath}
\usepackage[toc,page,header]{appendix}

\usepackage{algorithm}
\usepackage[noend]{algorithmic}
\usepackage{bbm}
\usepackage{enumitem}
\usepackage{gensymb}
\usepackage{refcount}
\usepackage{mathtools}
\usepackage{stackengine}
\newcommand\barbelow[1]{\stackunder[1.2pt]{$#1$}{\rule{1.5ex}{.1ex}}}
\newcommand*\samethanks[1][\value{footnote}]{\footnotemark[#1]}

\DeclarePairedDelimiter{\ceil}{\lceil}{\rceil}
\usepackage[top=1in,bottom=1in,left=1in,right=1in]{geometry}
\makeatletter
\def\myfnt{\ifx\protect\@typeset@protect\expandafter\footnote\else\expandafter\@gobble\fi}
\makeatother

\makeatletter
\def\BState{\State\hskip-\ALG@thistlm}
\makeatother

\makeatletter

\makeatother

\makeatletter

\makeatother

\hypersetup{colorlinks,urlcolor=black, citecolor=blue}

\def\1{\mathds{1}}

\setlength{\abovedisplayshortskip}{3pt}
\setlength{\belowdisplayshortskip}{3pt}

\newcommand{\TP}{\emph{TP}}
\newcommand{\TN}{\emph{TN}}

\newcommand{\TPs}{\emph{TP }}

\newcommand{\Ccal}{{\cal C}}

\newcommand{\Hcal}{{\cal H}}

\newcommand{\Scal}{{\cal S}}

\newcommand{\Xcal}{{\cal X}}
\newcommand{\Ycal}{{\cal Y}}

\newcommand{\Pmbb}{\mathbb{P}}

\newcommand{\Rmbb}{\mathbb{R}}

\newcommand{\Umbb}{\mathbb{U}}

\newcommand{\mmbf}{\mathbf{m}}

\newcommand{\oline}[1]{\mkern 1.5mu\overline{\mkern-1.5mu#1}}

\newcommand{\Cbar}{\oline{C}}

\renewcommand{\hbar}{\oline{h}}

\newcommand{\bmmbf}{\oline{\mathbf{m}}}
\newcommand{\bmone}{\oline{m}_{11}}
\newcommand{\bmzero}{\oline{m}_{00}}

\newcommand{\bell}{\oline{\ell}}
\newcommand{\btau}{\oline{\tau}}

\newcommand{\hmmbf}{\hat{\mathbf{m}}}

\newcommand{\hpone}{\hat{p}_{11}}
\newcommand{\hpzero}{\hat{p}_{00}}
\newcommand{\hqone}{\hat{q}_{11}}
\newcommand{\hqzero}{\hat{q}_{00}}
\newcommand{\hqnot}{\hat{q}_{0}}

\newcommand{\sphi}{\phi^*}
\newcommand{\smmbf}{\mathbf{m}^*}
\newcommand{\smone}{{m_{11}^*}}
\newcommand{\smzero}{{m_{00}^*}}
\newcommand{\spone}{{p_{11}^*}}
\newcommand{\spzero}{{p_{00}^*}}
\newcommand{\sqone}{{q_{11}^*}}
\newcommand{\sqzero}{{q_{00}^*}}
\newcommand{\sqnot}{{q_{0}^*}}

\newcommand{\tmmbf}{\barbelow{\mathbf{m}}}

\newcommand{\tell}{\barbelow{\ell}}
\newcommand{\ttau}{\barbelow{\tau}}

\newcommand{\pphi}{{\phi'}}

\newcommand{\ppone}{{p}_{11}'}
\newcommand{\ppzero}{{p}_{00}'}
\newcommand{\pqone}{{q}_{11}'}
\newcommand{\pqzero}{{q}_{00}'}
\newcommand{\pqnot}{{q}_{0}'}

\newcommand{\ppphi}{{\phi''}}

\newcommand{\pppone}{{p}_{11}''}
\newcommand{\pppzero}{{p}_{00}''}
\newcommand{\ppqone}{{q}_{11}''}
\newcommand{\ppqzero}{{q}_{00}''}
\newcommand{\ppqnot}{{q}_{0}''}

\newcommand{\hphi}{\hat{\phi}}

\newcommand{\Chat}{\hat{C}}

\newcommand{\hhat}{\hat{h}}

\DeclareMathOperator{\argmin}   {arg\,min}
\DeclareMathOperator{\argmax}   {arg\,max}




\newcommand{\baligned}     {\begin{aligned}}
	\newcommand{\ealigned}     {\end{aligned}}
\newcommand{\barray}       {\begin{array}}
	\newcommand{\earray}       {\end{array}}
\newcommand{\bbmatrix}     {\begin{bmatrix}}
	\newcommand{\ebmatrix}     {\end{bmatrix}}
\newcommand{\bcases}       {\begin{cases}}
	\newcommand{\ecases}       {\end{cases}}
\newcommand{\bcenter}      {\begin{center}}
	\newcommand{\ecenter}      {\end{center}}
\newcommand{\bcolumn}      {\begin{column}}
	\newcommand{\ecolumn}      {\end{column}}
\newcommand{\bcolumns}     {\begin{columns}}
	\newcommand{\ecolumns}     {\end{columns}}
\newcommand{\benumerate}   {\begin{enumerate}}
	\newcommand{\eenumerate}   {\end{enumerate}}
\newcommand{\bequation}    {\begin{equation}}
	\newcommand{\eequation}    {\end{equation}}
\newcommand{\bequationn}   {\begin{equation*}}
	\newcommand{\eequationn}   {\end{equation*}}
\newcommand{\bfigure}      {\begin{figure}}
	\newcommand{\efigure}      {\end{figure}}
\newcommand{\bflushright}  {\begin{flushright}}
	\newcommand{\eflushright}  {\end{flushright}}
\newcommand{\bitemize}     {\begin{itemize}}
	\newcommand{\eitemize}     {\end{itemize}}
\newcommand{\bpmatrix}     {\begin{pmatrix}}
	\newcommand{\epmatrix}     {\end{pmatrix}}
\newcommand{\bsubequations}{\begin{subequations}}
	\newcommand{\esubequations}{\end{subequations}}
\newcommand{\btable}       {\begin{table}}
	\newcommand{\etable}       {\end{table}}
\newcommand{\btabular}     {\begin{tabular}}
	\newcommand{\etabular}     {\end{tabular}}
\newcommand{\bvmatrix}     {\begin{vmatrix}}
	\newcommand{\evmatrix}     {\end{vmatrix}}

\newcommand{\bequali}      {\bsubequations\begin{align}}
	\newcommand{\eequali}      {\end{align}\esubequations}

\newtheorem{assumption}{Assumption}
\newtheorem{prop}{Proposition}
\newtheorem{example}{Example}
\newtheorem{remark}{Remark}
\newtheorem{theorem}{Theorem}
\newtheorem{corollary}{Corollary}
\newtheorem{definition}{Definition}
\newtheorem{lemma}{Lemma}

\newcommand{\balgorithm}  {\begin{algorithm}}
	\newcommand{\ealgorithm}  {\end{algorithm}}
\newcommand{\balgorithmic}{\begin{algorithmic}}
	\newcommand{\ealgorithmic}{\end{algorithmic}}
\newcommand{\bassumption} {\begin{assumption}}
	\newcommand{\eassumption} {\end{assumption}}
\newcommand{\bcorollary}  {\begin{corollary}}
	\newcommand{\ecorollary}  {\end{corollary}}
\newcommand{\bdefinition} {\begin{definition}}
	\newcommand{\edefinition} {\end{definition}}
\newcommand{\bexample}    {\begin{example}}
	\newcommand{\eexample}    {\end{example}}
\newcommand{\bprop}    {\begin{prop}}
	\newcommand{\eprop}    {\end{prop}}
\newcommand{\blemma}      {\begin{lemma}}
	\newcommand{\elemma}      {\end{lemma}}
\newcommand{\bproblem}    {\begin{problem}}
	\newcommand{\eproblem}    {\end{problem}}
\newcommand{\bproof}      {\begin{proof}}
	\newcommand{\eproof}      {\end{proof}}
\newcommand{\bremark}     {\begin{remark}}
	\newcommand{\eremark}     {\end{remark}}
\newcommand{\btheorem}    {\begin{theorem}}
	\newcommand{\etheorem}    {\end{theorem}}
\usepackage{placeins,afterpage}
\usepackage[customcolors]{hf-tikz}
\usepackage{adjustbox}
\hypersetup{colorlinks,urlcolor=blue}



\usetikzlibrary{calc}
\usepackage{chngcntr}
\counterwithout{equation}{section}
\counterwithout{table}{section}
\counterwithout{figure}{section}
\counterwithout{algorithm}{section}
\begin{document}

\title{Performance Metric Elicitation from Pairwise Classifier Comparisons}
\setcounter{footnote}{1}
\author{Gaurush Hiranandani\thanks{
University of Illinois Urbana-Champaign}\\
gaurush2@illinois.edu
\and
Shant Boodaghians\samethanks \\
boodagh2@illinois.edu
\and
Ruta Mehta\samethanks \\
rutameht@illinois.edu
\and
Oluwasanmi Koyejo\samethanks \\
sanmi@illinois.edu
}
\date{\today}

\flushbottom
\maketitle

\begin{abstract}
Given a binary prediction problem, which performance metric should the classifier optimize? 
We address this question by formalizing the problem of \emph{Metric Elicitation}. The goal of metric elicitation is to discover the performance metric of a practitioner, which reflects her innate rewards (costs) for correct (incorrect) classification. 
In particular, we focus on eliciting binary classification performance metrics from pairwise feedback, where a practitioner is queried to provide relative preference between two classifiers. By exploiting key geometric properties of the space of confusion matrices, we obtain provably query efficient algorithms for eliciting linear and linear-fractional performance metrics. We further show that our method is robust to feedback and finite sample noise.
\end{abstract}
\section{Introduction}
\label{ssec:introduction}

Selecting an appropriate performance metric is crucial to the real-world utility of predictive machine learning.  
Specialized teams of statisticians and economists are routinely hired in the industry to monitor many metrics -- since optimizing the wrong metric directly translates into lost revenue~\cite{Dmitriev2016MeasuringM}. 
Medical predictions are another important application, where ignoring cost sensitive trade-offs can directly impact lives~\cite{sox1988medical}. 
Unfortunately, there is scant formal guidance within the literature for how a practitioner/user might choose a metric, beyond a few common default choices~\cite{caruana2004data, ferri2009experimental, sokolova2009systematic}, and even less guidance on selecting a metric which reflects the preferences of the practitioners/users.

\begin{figure}[t]
	\centering 
		\includegraphics[scale=0.5]{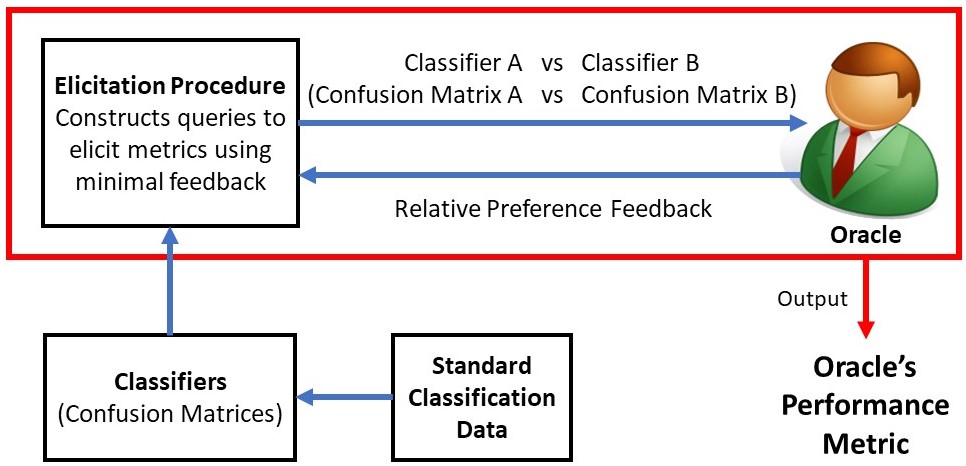}
	\caption{Metric Elicitation framework.}
	\label{fig:ME}
\end{figure}

\textbf{Metric Elicitation:} 
Motivated by the principle that the performance metric which best reflects implicit user tradeoffs results in learning models that best resonate with user preferences \cite{elkan2001foundations, sokolova2009systematic},   
we introduce a framework, \emph{metric elicitation (ME)}, for determining the binary classification performance metric from user feedback. Since human feedback is costly, the goal is to use as little feedback as  possible. 
On its face, ME simply requires querying a user (oracle) to determine the quality she assigns to classifiers that are learned from standard classification data; however, humans are often inaccurate in providing absolute preferences~\cite{qian2013active}. Therefore, we propose to employ pairwise comparison queries, where the user (oracle) is asked to compare two classifiers and provide an indicator of relative preference. 
Based on that relative preference feedback, we elicit the innate performance metric of the user (oracle). See Figure~\ref{fig:ME} for visual intuition of the framework.

Our approach is inspired by a large literature in economics and psychology on \emph{preference elicitation} \cite{samuelson1938note, mas1977recoverability, varian2005, braziunas2007minimax}. 
Here, the goal is to learn user preferences from purchases at posted prices.  
Since there is no notion of prices or purchases in {\em ME} for machine learning, standard approaches from these studies do not apply. In addition, we emphasize that the notion of pairwise classifier comparison is not new and is already prevalent in the industry. An example is A/B testing \cite{tamburrelli2014towards}, 
where the whole population of users acts as an oracle.\footnote{In A/B testing, sub-populations of users are shown classifier A vs. classifier B, and their responses determine the overall preference. Interestingly, while each person is shown a sample output from one of the classifiers, the entire user population acts as the oracle for comparing classifiers.}  Similarly, classifier comparison by a single expert is becoming commonplace due to advances in the field of interpretable machine learning~\cite{ribeiro2016should, doshi2017towards}.

In this first edition of ME, we focus on the most common performance metrics which are functions of the confusion matrix \cite{koyejo2014consistent, narasimhan2015consistent, sokolova2009systematic}, particularly, linear and ratio-of-linear functions.\footnote{Metrics depending on factors such as model complexity and interpretability are beyond the scope of this manuscript.} 
This includes almost all modern metrics such as
accuracy, $F_\beta$-Measure, Jaccard Similarity Coefficient~\cite{sokolova2009systematic}, etc.
By construction, pairwise classifier comparisons may be conceptually represented by their associated pairwise confusion matrix comparisons. 
Despite this apparent simplification, the problem remains challenging because one can only query feasible confusion matrices, i.e. confusion matrices for which there exists a classifier. As we show, our characterization of the space of confusion matrices enables the design of efficient binary-search type procedures that identify the innate performance metric of the oracle. 
While classifier (confusion matrix) comparisons may introduce additional noise, our approach remains robust, both to noise from classifier (confusion matrix) estimation, and to noise in the comparison itself. Thus, our work directly results in a practical algorithm. 

\textbf{Example:}  
Consider the case of cancer diagnosis, where a doctor's unknown, innate performance metric is a linear function of the confusion matrix, i.e., she has some innate reward values for True Positives and True Negatives 
-- 
equivalently (equiv.), costs for False Positives and False Negatives
-- 
based on known consequences of misdiagnosis. 
Here, the doctor takes the role of the oracle. 
Our proposed approach exploit the space of confusion matrices associated with all possible classifiers that can be learned from standard classification data
and determine the underlying rewards (equiv., costs) provably using the least possible number of pairwise comparison queries posed to the doctor. 


\noindent Our contributions are summarized as follows:
\begin{itemize}
	\item We propose the technical problem of \emph{Metric Elicitation}, a framework for determining supervised learning metrics from user feedback. For the case of pairwise feedback, we show that under certain conditions ME is equivalent to learning preferences between pairs of confusion matrices. 
	\item When the underlying metric is linear, we propose a binary search algorithm that can recover the metric with query complexity that decays logarithmically with the desired resolution. We further show that our query-complexity rates match the lower bound. 
	\item We extend the elicitation algorithm to more complex linear-fractional performance metrics.
	\item We prove robustness of the proposed approach under feedback and classifier estimation noise.
\end{itemize}
\section{Background}
\label{sec:background}

Let $X \in \Xcal$ and $Y \in \{0, 1\}$ represent the input and output random variables respectively (0 = negative class, 1 = positive class). 
We assume a dataset of size $n$, $\{(x_i, y_i)\}_{i=1}^n$, generated \emph{iid} from a data generating distribution $ \Pmbb \overset{\text{iid}}{\sim} (X, Y)$. 
Let $f_X$ be the marginal distribution for $\Xcal$. Let $\eta(x) = \Pmbb(Y = 1 | X = x)$ and $\zeta = \Pmbb(Y = 1)$ represent the conditional and the unconditional probability of the positive class, respectively. Note that the earlier term is a function of the input $x$; whereas, the latter is a constant. 
We denote a classifier by $h$, and let $\Hcal = \{h : \Xcal \rightarrow [0, 1]\}$ be the set of all classifiers. 
A confusion matrix for a classifier $h$ is denoted by $C(h, \Pmbb) \in \Rmbb^{2 \times 2}$, comprising true positives (TP), false positives (FP), false negatives (FN), and true negatives (TN) and is given by:
\begin{ceqn}
\begin{align}
	C_{11} &= TP(h, \Pmbb) = \Pmbb(Y = 1, h = 1), \nonumber \\
	C_{01} &= FP(h, \Pmbb) = \Pmbb(Y = 0, h = 1), \nonumber \\
	C_{10} &= FN(h, \Pmbb) = \Pmbb(Y = 1, h = 0), \nonumber \\ 
	C_{00} &= TN(h, \Pmbb) = \Pmbb(Y = 0, h = 0). 
	\label{eq:components}
\end{align}\nopagebreak
\end{ceqn}\nopagebreak
Clearly,  $\sum_{i, j} C_{ij} = 1$. We denote the set of all confusion matrices by $\Ccal = \{C(h, \Pmbb)  : h \in \Hcal\}$. 
Under the population law $\Pmbb$, the components of the confusion matrix can be further decomposed as: 
$FN(h, \Pmbb) = \zeta - TP(h, \Pmbb)$ and $FP(h, \Pmbb) = 1 - \zeta - TN(h, \Pmbb).$ 
This decomposition reduces the four dimensional space to two dimensional space. Therefore, the set of confusion matrices can be defined as $\Ccal = \{(TP(h, \Pmbb), TN(h, \Pmbb)) : h \in \Hcal\}$. For clarity, we will suppress the dependence on $\Pmbb$ in our notation. In addition, we will subsume the notation $h$ if it is implicit from the context and denote the confusion matrix by $C = (TP, TN)$. We represent the boundary of the set $\Ccal$ by $\partial\Ccal$. Any hyperplane (line) $\ell$ in the $(tp, tn)$ coordinate system is given by:
$$\ell := a\cdot tp + b\cdot tn = c, \quad \text{ where } a, b, c \in \Rmbb.$$
Let $\phi : [0, 1]^{2 \times 2}  \rightarrow \Rmbb$ be the performance metric for a classifier $h$ determined by its confusion matrix $C(h)$. Without loss of generality (WLOG), we assume that $\phi$ is a utility, so that larger values are better. 

\subsection{Types of Performance Metrics}
\label{ssec:metrics}

We consider two of the most common families of binary classification metrics, namely linear and linear-fractional functions of the confusion matrix \eqref{eq:components}.
\bdefinition
Linear Performance Metric (LPM): We denote this family by $\varphi_{LPM}$. Given constants (representing costs or weights) $\{a_{11}, a_{01}, a_{10}, a_{00}\} \in \Rmbb^4$, we define the metric as:
\begin{ceqn}
\begin{align}
\phi(C) &= a_{11}TP + a_{01}FP  + a_{10}FN + a_{00}TN \nonumber \\
				  &= m_{11}TP + m_{00}TN + m_0,   
\label{eq:linear}
\end{align}
\end{ceqn}
where $m_{11} = (a_{11} - a_{10})$, $m_{00} = (a_{00} - a_{01})$, and $m_0 = a_{10}\zeta + a_{01}(1-\zeta)$. 
\edefinition
\bexample Weighted Accuracy (WA)~\cite{steinwart2007compare}: 
$$WA = w_1TP+w_2TN,$$
where $w_1, w_2 \in [0, 1]$ 
($w_1, w_2$ can be shifted and scaled to $[0, 1]$ without changing the learning problem ~\cite{narasimhan2015consistent}).
\label{ex:lossbased}
\eexample

\bdefinition
Linear-Fractional Performance Metric (LFPM): We denote this family by $\varphi_{LFPM}$. Given constants (representing costs or weights) $\{a_{11}, a_{01}, a_{10}, a_{00}$, $b_{11}, b_{01}, b_{10}, b_{00}\} \in \Rmbb^8$, we define the metric as: 
\begin{ceqn}
\begin{align}
\phi(C) &= \frac{a_{11}TP +  a_{01}FP +  a_{10}FN +  a_{00}TN}{b_{11}TP +  b_{01}FP +  b_{10}FN +  b_{00}TN} \nonumber \\
&= \frac{p_{11}TP +  p_{00}TN +  p_0}{q_{11}TP +  q_{00}TN +  q_0},
\label{linear-fractional}
\end{align}
\end{ceqn}
where $p_{11} = (a_{11} - a_{10})$, $p_{00} = (a_{00} - a_{01})$, $q_{11} = (b_{11} - b_{10})$, $q_{00} = (b_{00} - b_{01})$, $p_0 = a_{10}\zeta + a_{01}(1 - \zeta)$, $q_0 = b_{10}\zeta + b_{01}(1 - \zeta)$.
\edefinition
\bexample

The $F_\beta$ measure and the Jaccard similarity coefficient (JAC)~\cite{sokolova2009systematic}:
\begin{align}
    F_{\beta} = \frac{TP}{\frac{TP}{1+\beta^2} - \frac{TN}{1+\beta^2} + \frac{\beta^2\zeta + 1 - \zeta}{1+\beta^2}}, \quad JAC = \frac{TP}{1-TN}.
\label{ex:lf-examples}
\end{align}
\eexample

\subsection{Bayes Optimal and Inverse Bayes Optimal Classifiers}
\label{ssec:bayes}

Given a performance metric $\phi$, the Bayes utility $\btau$ is the optimal value of the performance metric over all classifiers, i.e., $\btau = \sup_{h \in \Hcal}\phi(C(h)) = \sup_{C \in \Ccal}\phi(C)$. The Bayes classifier $\hbar$ (when it exists) is the classifier that optimizes the performance metric, so $\hbar = \argmax\limits_{h \in \Hcal}\phi(C(h)).$ Similarly, the Bayes confusion matrix is given by $\Cbar = \argmax\limits_{C \in \Ccal}\phi(C).$ 
We further define the inverse Bayes utility   
$\ttau = \inf_{h \in \Hcal}\phi(C(h)) = \inf_{C \in \Ccal}\phi(C)$. The inverse Bayes classifier is given by $\barbelow{h} = \argmin\limits_{h \in \Hcal}\phi(C(h))$. Similarly, the inverse Bayes confusion matrix is given by $\barbelow{C} = \argmin\limits_{C \in \Ccal}\phi(C).$ 
Notice that for $\phi \in \varphi_{LPM}$ \eqref{eq:linear}, 
the Bayes classifier predicts the label which  maximizes the expected utility conditioned on the instance, as discussed below.
\bprop Let $\phi \in \varphi_{LPM}$, then 
$$
\hbar(x) = \left\{\begin{array}{lr}
			 \1[\eta(x) \geq \frac{m_{00}}{m_{11} + m_{00}}],& \; m_{11} + m_{00} \geq 0 \\
			 \1[\frac{m_{00}}{m_{11} + m_{00}} \geq \eta(x) ],& \;  o.w. 
	 	 \end{array}\right\}
$$
is a Bayes optimal classifier \emph{w.r.t} 
$\phi$. Further, the inverse Bayes classifier is given by $\barbelow{h}= 1 - \hbar$.
\label{pr:bayeslinear}
\eprop

\subsection{Problem Setup}
\label{ssec:query}

We first formalize \emph{oracle query}. 
Recall that by the definition of confusion matrices~\eqref{eq:components}, there exists a surjective mapping from $\Hcal \rightarrow \Ccal$.
An oracle is queried to determine relative preference between two classifiers. 
However, since we only consider metrics which are functions of the confusion matrix, a comparison query over classifiers becomes equivalent to a comparison query over confusion matrices in our setting.

\bdefinition
Oracle Query: Given two classifiers $h, h'$ (equiv. to confusion matrices $C, C'$ respectively), a query to the Oracle (with metric $\phi$) is represented by:
\begin{ceqn}
\begin{align}
\Gamma(h, h') = \Omega(C, C') &= \1[\phi(C) > \phi(C')] =: \1[C \succ C'],
\end{align}
\end{ceqn}
where $\Gamma: \Hcal \times \Hcal \rightarrow \{0,1\}$ and $\Omega: \Ccal \times \Ccal \rightarrow \{0, 1\}$. The query denotes whether $h$ is preferred to $h'$ (equiv. to $C$ is preferred to $C'$) as measured according to $\phi$.
\label{def:query}
\edefinition

We emphasize that depending on practical convenience, the oracle may be asked to compare either confusion matrices or classifiers achieving the corresponding confusion matrices, via approaches discussed in Section~\ref{ssec:introduction}. 
Henceforth, for simplicity of notation, we will treat any comparison query as confusion matrix comparison query. 
Next, we state the metric elicitation problem. 

\bdefinition 
    Metric Elicitation (given $\Pmbb$): Suppose that the oracle's true, unknown performance metric is $\phi$.  Recover a metric $\hphi$ by querying the oracle for as few pairwise comparisons of the form $\Omega(C, C')$, such that $\Vert\phi - \hphi\Vert_{\_\_} < \kappa$ for sufficiently small $\Rmbb \ni\kappa > 0$ and for any suitable norm $\Vert \cdot \Vert_{\_\_}$.
\label{eq:me}
\edefinition
Notice that Definition~\ref{eq:me} involves true population quantities $C, C'$ (See \eqref{eq:components}). However, in practice, we are given only finite samples. This leads to a more practical definition of metric elicitation problem.

\bdefinition 
Metric Elicitation (given $\{(x_i,y_i)\}_{i=1}^n$): The same problem as stated in Definition~\ref{eq:me}, except that the queries are of the form $\Omega(\Chat, \Chat')$, where $\Chat, \Chat'$ are the estimated confusion matrices from the samples.
\label{eq:mefinite}
\edefinition

Ultimately, we want to perform ME as described in  Definition~\ref{eq:mefinite}. A good approach to do so is to first solve ME as defined in Definition~\ref{eq:me}, i.e, ME assuming access to the appropriate population quantities, and then consider practical implementation using finite data. This is a standard approach in decision theory (see e.g. \cite{koyejo2015consistent}), where estimation error from finite samples is adjudged as a noise source and handled accordingly.
\section{Confusion Matrices}
\label{sec:confusion}

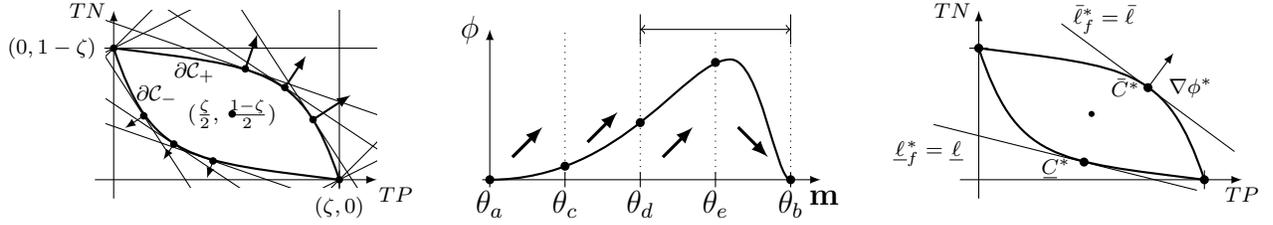
\begin{figure*}[t]
	\centering
	\begin{tikzpicture}[scale = 1.0]
    

    	\begin{scope}[shift={(-5,0)},scale = 0.5]\scriptsize
    	\def\r{0.1};
    	\def\s{0.06};
	
	\draw[thick] (0,3.5) .. controls (4,3) and (5,3) .. (6,0)
    	.. controls (2,0.5) and (1,0.5) .. (0,3.5);
    \draw[-latex] (0,-.5)--(0,4.5); 
    \draw[-latex] (-.5,0)--(7,0);
    \node[left] at (0,4.5) {$TN$};
    \node[below] at (7.5,0) {$TP$};
    \draw (6,0) +(0,0.25) -- +(0,-.25);
    \draw (0,3.5) +(.25,0) -- +(-.25,0);
    \node[below] at (6,-.25) {$(\zeta, 0)$};
    \node[left] at (-0.25,3.5) {$(0, 1-\zeta)$};
    
    \coordinate (C*) at (4.55,2.455);    
    \coordinate (C1) at (3.5,2.95);
    \coordinate (C2) at (5.3,1.6);
    
	\coordinate (Cent) at (3.15,1.75);    
    
    \coordinate (Ct) at (1.6,0.95);
    \coordinate (Ct1) at (2.64,0.5);
    \coordinate (Ct2) at (0.8,1.7); 
    \coordinate (C+) at (2.15,2.85);
    
    \coordinate (C-) at (1.15,2.25);

    \fill[color=black] 
    		(0,3.5) circle (\r)
    		(6,0) circle (\r)
        (Cent) circle (\r)
        (C*) circle (\r)
        (C1) circle (\r)
        (C2) circle (\r)
        (Ct) circle (\r)
        (Ct1) circle (\r)
        (Ct2) circle (\r);
    
    	\clip (-0.2,-0.2) rectangle (7,4.5);   
    
    \draw (C*) +(-24,16) -- +(24,-16);
    \draw[-latex, thick] (C*) -- +(.56,0.84);
    \draw (C1) +(-23,8) -- +(23,-8);
    \draw[-latex,thick] (C1) -- +(.32,0.92);
    \draw (C2) +(-13,20) -- +(13,-20);
    \draw[-latex,thick] (C2) -- +(1,.65);
    
    \draw (Ct) +(-24,16) -- +(24,-16);
    \draw[-latex] (Ct) -- +(-.28,-0.42);
    \draw (Ct1) +(-23,8) -- +(23,-8);
    \draw[-latex] (Ct1) -- +(-.16,-0.46);
    \draw (Ct2) +(-13,20) -- +(13,-20);
    \draw[-latex] (Ct2) -- +(-.5,-.325);
    
	
	\draw(6,0) +(0,10) -- +(0,-20);
	\draw(6,0) +(10,10) -- +(-20,-20);
	\draw(6,0) +(20,10) -- +(-20,-10);
	\draw(0,3.5) +(10,0) -- +(-20,0);    
	\draw(0,3.5) +(10,10) -- +(-20,-20);    
	\draw(0,3.5) +(10,5) -- +(-20,-10);    
    
    \node at (Cent) {{$(\frac{\zeta}{2},\, \frac{1-\zeta}{2})$}};
    
    \node at (C+) {{$\partial\Ccal_+$}};
    \node at (C-) {{$\partial\Ccal_-$}};

    \end{scope}

    
	\def\r{0.06};
	
    \draw[thick] (0,0) .. controls (1.8,0) and (2.6,1.6) .. (3.2,1.6) 
    ..controls (3.6,1.6) and (3.8,0) .. (4,0);
    \draw[-latex] (0,-.1)--(0,2); 
    \draw[-latex] (-0.1,0)--(4.4,0);
    \node[left] at (0,2) {$\phi$};
    \node[below right] at (4.1,0) {$\mathbf m$};
   
   	\coordinate (C1) at (0,0.00);
    \coordinate (C2) at (1,0.18);
    \coordinate (C3) at (2,0.76);
    \coordinate (C4) at (3,1.56);
    \coordinate (C5) at (4,0.00);
    
    \node[below] at (0,0) {$\theta_a$};
    \node[below] at (1,0) {$\theta_c$};
    \node[below] at (2,0) {$\theta_d$};
    \node[below] at (3,0) {$\theta_e$};
    \node[below] at (4,0) {$\theta_b$};
    
    \foreach \x in {1,2,3,4} {
    	\draw (\x,-.1) -- (\x,.1);
        \draw[dotted] (\x,0) -- (\x,2);
    }
    \fill[color=black] 
    		(C1) circle (\r)
    		(C2) circle (\r)
            (C3) circle (\r)
            (C4) circle (\r)
            (C5) circle (\r);   
    
    \draw[very thick,-latex] (0.3,0.3) -- (0.7,0.7);
    \draw[very thick,-latex] (1.3,0.5) -- (1.7,0.9);
    \draw[very thick,-latex] (2.3,0.3) -- (2.7,0.7);
    \draw[very thick,-latex] (3.3,0.7) -- (3.7,0.3);
    
    \draw (2,1.8)--(2,2.2) (4,1.8)--(4,2.2);
    \draw[<->] (2,2)--(4,2);

    
     \begin{scope}[shift={(6.5,0)},scale = 0.5]\scriptsize
     
     \def\r{0.12};
	
	\draw[thick] (0,3.5) .. controls (4,3) and (5,3) .. (6,0)
    	.. controls (2,0.5) and (1,0.5) .. (0,3.5);
    \draw[-latex] (0,-.5)--(0,4.5); 
    \draw[-latex] (-.5,0)--(7,0);
    \node[left] at (0,4.5) {$TN$};
    \node[below] at (7,0) {$TP$};
    \draw (6,0) +(0,0.25) -- +(0,-.25);
    \draw (0,3.5) +(.25,0) -- +(-.25,0);
    
    \coordinate (C*) at (4.5,2.45);
    \coordinate (l*) at (2.3,4.25);
    \coordinate (Ct) at (2.8,0.47);
    \coordinate (lt) at (-0.2,1.25); 
    
    \fill[color=black] (0,3.5) circle (\r)
    		(6,0) circle (\r)
            (3,1.75) circle (0.08)
            (C*) circle (\r)
            (Ct) circle (\r);
    \node[right] at (l*) {$\bar{\ell}_f^* = \bar{\ell}$};
    \node[below left] at (lt) {$\barbelow{\ell}^*_f = \barbelow{\ell}$};
    
    \draw (C*) +(-2.3,1.7) -- +(2.3,-1.7);
    \draw[-latex] (C*) +(0,0) -- +(0.68,0.92);
    \node[left] at (C*) {$\bar{C}^*$};
    \node[below right] at ($(C*)+(0.34,0.46)$) {$\nabla \phi^*$};
   	
    \draw (Ct) +(-4,1) -- +(3,-0.75);
    \node[below left] at ($(Ct)+(-0.15,0.30)$) {$\barbelow{C}^*$};
     \end{scope}
\end{tikzpicture}
	\caption{\small{(a)} \normalsize  Supporting hyperplanes (with normal vectors) and resulting geometry of $\Ccal$; \small(b) \normalsize Sketch of Algorithm~\ref{alg:linear}; \small(c) \normalsize Maximizer $\Cbar^*$ and minimizer $\barbelow{C}^*$ along with the supporting hyperplanes for LFPMs.}
	\label{fig:lin-fr}
\end{figure*}

ME will require confusion matrices that are achieved by all possible 
classifiers, thus it is necessary to characterize the set $\Ccal$ in a way which is useful for the task.

\bassumption
We assume $g(t)=\Pmbb[\eta(X)\geq t]$ is continuous and 
strictly decreasing for $t \in [0, 1]$.
\label{as:eta}
\eassumption

This is equivalent to standard assumptions \cite{koyejo2014consistent} that the event $\eta(X)=t$ has positive density but zero probability. 
Note that this requires $X$ to have no point mass.

\bprop{(Properties of $\Ccal$ --- Figure~\ref{fig:lin-fr}(a).)}\label{pr:strict-convex}
The set of confusion matrices $\Ccal$ is convex, closed, contained in the rectangle $[0,\zeta]\times [0,1-\zeta]$ (bounded), and $180\degree$ rotationally symmetric around the center-point $(\frac{\zeta}{2}, \frac{1-\zeta}{2})$. Under Assumption~\ref{as:eta}, $(0,1-\zeta)$ and $(\zeta,0)$ are the only vertices of $\Ccal$, and $\Ccal$ is strictly convex. Thus, any supporting hyperplane of $\Ccal$ is tangent at only one point.\footnote{Additional visual intuition about the geometry of C (via an example) is given in Appendix~\ref{appendix:visualization}.}
\eprop

\subsection{LPM Parametrization and Connection with Supporting Hyperplanes of $\Ccal$}
\label{ssec:parametrization}
For an LPM $\phi$ \eqref{eq:linear}, Proposition~\ref{pr:strict-convex} guarantees the existence of a unique Bayes confusion matrix on the boundary $\partial \Ccal$.  
This is because optimum for a linear function over a strictly convex set is unique and lies on the boundary~\cite{boyd2004convex}.
Note that any linear function with the same trade-offs for \TPs and \TN, i.e. same $(m_{11}, m_{00})$, is maximized at the same boundary point regardless of the bias term $m_0$. 
Thus, different LPMs can be generated by varying trade-offs $\mmbf = (m_{11}, m_{00})$ such that $\norm{\mmbf} = 1$ and $m_0 = 0$. The condition $\norm{\mmbf} = 1$ does not affect the learning problem as discussed in Example~\ref{ex:lossbased}. In other words, the performance metric is scale invariant. 
This allows us to represent the family of linear metrics $\varphi_{LPM}$ by a single parameter~$\theta \in [0, 2\pi]$:
\begin{ceqn}
\begin{equation}
	\varphi_{LPM} = \{ \mmbf = (\cos\theta, \sin\theta) : \theta \in [0, 2\pi]\}.
	\label{set:lfpm}
\end{equation}
\end{ceqn}
Given $\mmbf$ (equiv. to $\theta$), we can recover the Bayes classifier using Proposition~\ref{pr:bayeslinear}, and then the Bayes confusion matrix  $\Cbar_\theta$ = $\Cbar_\mmbf = (\oline{TP}_{\mmbf}, \oline{TN}_{\mmbf})$ using \eqref{eq:components}. Under Assumption~\ref{as:eta}, due to strict convexity of $\Ccal$, the Bayes confusion matrix $\Cbar_\mmbf$ is unique; therefore, we have that 
\begin{ceqn}
\begin{align}
\langle \mmbf, C \rangle <  \langle \mmbf, \Cbar_\mmbf \rangle \qquad \forall \; C \in \Ccal, C \neq \Cbar_\mmbf.
\end{align}
\end{ceqn}
Notice the connection between the linear performance metrics and the supporting hyperplanes of the set $\Ccal$ (see Figure~\ref{fig:lin-fr}(a)). Given $\mmbf$, there exists a supporting hyperplane tangent to $\Ccal$ at only $\Cbar_\mmbf$ defined as follows:
\begin{ceqn}
\bequation
\bell_\mmbf \coloneqq m_{11}\cdot tp + m_{00}\cdot tn = m_{11}\oline{TP}_{\mmbf} +  m_{00}\oline{TN}_{\mmbf}.
\label{eq:support}
\eequation
\end{ceqn}

Clearly, if $m_{11}$ and $m_{00}$ are of opposite sign (i.e., $\theta \in (\sfrac{\pi}{2}, \pi)\cup(\sfrac{3\pi}{2}, 2\pi)$), then $\hbar_{\mmbf}$ is the trivial classifier predicting either 1 or 0 everywhere. In other words, if the slope of the hyperplane is positive, then it touches the set $\Ccal$ either at $(\zeta, 0)$ or $(0, 1 - \zeta)$.  
When $m_{11}, m_{00} \neq 0$ with the same sign (i.e., $\theta \in (0, \sfrac{\pi}{2})\cup(\pi, \sfrac{3\pi}{2})$), 
then the Bayes confusion matrix is away from the two vertices. Now, we may split the boundary $\partial\Ccal$ as follows:
\bdefinition 
\label{def:boundary}
The Bayes confusion matrices for LPMs with $m_{11}, m_{00} \geq 0$ $(\theta \in [0, \sfrac{\pi}{2}])$ form the upper boundary, denoted by $\partial\Ccal_+$. The Bayes confusion matrices for LPMs with $m_{11}, m_{00} < 0$ $(\theta \in (\pi, \sfrac{3\pi}{2}))$ form the lower boundary, denoted by $\partial\Ccal_-$. From Proposition ~\ref{pr:bayeslinear}, it follows that the confusion matrices in $\partial\Ccal_+$ and $\partial\Ccal_-$ correspond to the classifiers of the form  $\1[\eta(x)\geq \delta]$ and $\1[\delta \geq \eta(x)]$, respectively, for some $\delta \in [0, 1]$.
\edefinition
\section{Algorithms}
\label{sec:algorithms}

In this section, we propose binary-search type algorithms, which exploit the geometry of the set $\Ccal$ (Section \ref{sec:confusion}) to find the maximizer / minimizer and the associated supporting hyperplanes for any quasiconcave / quasiconvex metrics. These algorithms are then used to elicit LPMs and LFPMs, both of which belong to both quasiconcave and quasiconvex function families.

We allow \emph{noisy} oracles; however, for simplicity, 
we will first discuss algorithms and elicitation with no-noise, and then show that they are robust to the noisy feedback (Section~\ref{sec:noise}). 
Moreover, as one typically prefers metrics which reward correct classification, we first discuss metrics that are monotonically increasing in both $\TP$ and $\TN$. 
The monotonically decreasing case is discussed in Appendix~\ref{appendix:decreasing} as a natural extension. 

The following lemma for any quasiconcave and quasiconvex metrics forms the basis of our proposed algorithms. 
\blemma
Let 
$\rho^+:[0,1]\to \partial\mathcal C_+$, $\rho^-:[0,1]\to \partial\mathcal C_-$
be continuous, bijective, parametrizations of the upper and lower boundary,
respectively. Let $\phi:\mathcal C\to \mathbb R$ be a quasiconcave function,
and $\psi:\mathcal C\to \mathbb R$ be a quasiconvex function, 
which are monotone increasing in both $TP$ and $TN$.
Then the composition $\phi\circ \rho^+: [0,1]\to\mathbb R$ is
quasiconcave (and therefore unimodal) on the interval $[0, 1]$, and $\psi\circ\rho^-:[0,1]\to\mathbb R$ is quasiconvex (and therefore unimodal) on the interval $[0,1]$. \label{lem:quasiconcave}
\elemma

The unimodality of quasiconcave (quasiconvex) metrics on the upper (lower) boundary of the set $\Ccal$ 
along with the one-dimensional parametrization of $\mmbf$ using $\theta \in [0, 2\pi]$ (Section \ref{sec:confusion}) allows us to devise binary-search-type methods to find the maximizer $\Cbar$, the minimizer $\barbelow{C}$, and the first order approximation of $\phi$ at these points, i.e., the supporting hyperplanes at $\Cbar$ and $\barbelow{C}$. 
\balgorithm[t]
\caption{Quasiconcave Metric Maximization}
\label{alg:linear}
\balgorithmic[1]
\STATE \textbf{Input:} $\epsilon > 0$ and oracle $\Omega$. 
\STATE \textbf{Initialize:} $\theta_a = 0$, $\theta_b = \frac{\pi}{2}$.
\WHILE {$\abs{\theta_b - \theta_a} > \epsilon$}
\STATE Set $\theta_c = \frac{3\theta_a + \theta_b}{4}$, $\theta_d = \frac{\theta_a + \theta_b}{2}$, and $\theta_e = \frac{\theta_a + 3\theta_b}{4}$. Set corresponding slopes ($\mmbf$'s) using \eqref{set:lfpm}. \\
\STATE Obtain $\hbar_{\theta_a}$,$\hbar_{\theta_c}$,$\hbar_{\theta_d}$, $\hbar_{\theta_e}, \hbar_{\theta_b}$ using Proposition \ref{pr:bayeslinear}. Compute $\Cbar_{\theta_a}$,$\Cbar_{\theta_c}$,$\Cbar_{\theta_d}$,$\Cbar_{\theta_e}, \Cbar_{\theta_b}$ using~\eqref{eq:components}.
\STATE Query $\Omega(\Cbar_{\theta_c}, \Cbar_{\theta_a}), \Omega(\Cbar_{\theta_d}, \Cbar_{\theta_c}), \Omega(\Cbar_{\theta_e}, \Cbar_{\theta_d}),$ and $\Omega(\Cbar_{\theta_b}, \Cbar_{\theta_e})$.
\STATE If $\Cbar_\theta \succ \Cbar_{\theta'}\prec \Cbar_{\theta''}$ for consecutive $\theta<\theta'<\theta''$, assume the default order $\Cbar_\theta \prec \Cbar_{\theta'}\prec \Cbar_{\theta''}.$
\STATE {\bf if }($\Cbar_{\theta_a}^* \succ \Cbar_{\theta_c}^*$) Set $\theta_b = \theta_d$.
\STATE {\bf elseif }($\Cbar_{\theta_a}^* \prec \Cbar_{\theta_c}^* \succ \Cbar_{\theta_d}^*$) Set $\theta_b = \theta_d$.
\STATE {\bf elseif }($\Cbar_{\theta_c}^* \prec \Cbar_{\theta_d}^* \succ \Cbar_{\theta_e}^*$) Set $\theta_a = \theta_c$,  $\theta_b = \theta_e$.
\STATE {\bf elseif }($\Cbar_{\theta_d}^* \prec \Cbar_{\theta_e}^* \succ \Cbar_{\theta_b}^*$) Set $\theta_a = \theta_d$.
\STATE {\bf else } 
 Set $\theta_a = \theta_d$.
\ENDWHILE
\STATE\textbf{Output:} $\bmmbf, \Cbar,$ and $\bell$, where $\bmmbf = \mmbf_d$ ($\theta_d$), $\Cbar = {\Cbar}_{\theta_d},$ and $\bell := \langle \bmmbf, (tp, tn) \rangle = \langle \bmmbf, {\Cbar} \rangle$.
\ealgorithmic
\ealgorithm

\textbf{Algorithm \ref{alg:linear}.} \emph{Maximizing quasiconcave metrics and finding supporting hyperplanes at the optimum:} Since $\phi$ is monotonically increasing in both \TPs and \TN, and $\mathcal C$ is convex, the maximizer must be on the upper boundary. 
Hence, we start with the interval $[\theta_a = 0, \theta_b = \frac{\pi}{2}]$ (Definition~\ref{def:boundary}). We divide it into four equal parts and set slopes using \eqref{set:lfpm} in line 4 (see  Figure~\ref{fig:lin-fr}(b) for visual intuition). 
Then, we compute the Bayes classifiers using Proposition \ref{pr:bayeslinear} and the associated Bayes confusion matrices in line 5. We pose four pairwise queries to the oracle in line 6. 
Line 7 gives the default direction to binary search in case of out-of-order responses.\footnote{Due to finite samples, $\Ccal$'s boundary may have staircase-type bumps in practice. This may lead to out-of-order responses, even when the metric is unimodal \emph{w.r.t.} $\theta$.} In lines 8-12, we shrink the search interval by half based on oracle responses. 
We stop when the search interval becomes smaller than a given $\epsilon > 0$ (tolerance).
Lastly, we output the slope $\bmmbf$, the Bayes confusion matrix $\Cbar$, and the supporting hyperplane $\bell$ at that point.

\textbf{Algorithm \hypertarget{alg:quasiconvex}{\hyperlink{alg:quasiconvex}{2}}}\textbf{.} \emph{Minimizing quasiconvex metrics and finding supporting hyperplane at the optimum:} The same algorithm can be used for quasiconvex minimization with only two changes. First, we start with $\theta \in [\pi,\frac 32\pi]$, because the optimum will lie on the lower boundary $\partial\mathcal C_-$. Second, we check for $C\prec C'$ whenever Algorithm~\ref{alg:linear} checks for $C\succ C'$, and vice versa. 
Here, we output the counterparts, i.e., slope $\tmmbf$,  inverse Bayes Confusion matrix $\barbelow{C}$, and supporting hyperplane $\tell$.  

\section{METRIC ELICITATION}
\label{sec:me}
\begin{figure}[t] \centering \fbox{ \parbox{0.95\columnwidth}{ {
\textbf{LPM Elicitation} (True metric $\sphi = \smmbf$)
\benumerate
\item Run Algorithm~\ref{alg:linear} to get $\Cbar^*$ and a hyperplane $\bell$.
\item Set the elicited metric to be the slope of $\bell$.
\eenumerate

\textbf{LFPM Elicitation} (True metric $\sphi$)
\benumerate
\item Run Algorithm~\ref{alg:linear} to get $\Cbar^*$, a hyperplane $\bell$, and SoE~\eqref{eq:lin-fr-equi}.
\item Run Algorithm~\hyperlink{alg:quasiconvex}{2} to get $\barbelow{C}^*$, a hyperplane $\tell$, and SoE~\eqref{eq:lin-fr-equi-lower}.
\item Run the oracle-query independent Algorithm~\ref{alg:grid-search} to get the elicited metric, which satisfies both the SoEs.
\eenumerate
}
}
}
\caption{LPM and LFPM elicitation procedures.}
\label{fig:me}
\end{figure}

In this section, we discuss how Algorithms~\ref{alg:linear}, \hyperlink{alg:quasiconvex}{2},  and~\ref{alg:grid-search} (described later) are used as subroutines to elicit LPMs and LFPMs. See Figure~\ref{fig:me} for a brief summary.

\subsection{Eliciting LPMs}
\label{ssec:elicit_linear}

Suppose that the oracle's metric is $ \varphi_{LPM} \ni \sphi = \smmbf$, where, WLOG, $\norm{\smmbf} =1$ and $m_0^* = 0$ (Section \ref{sec:confusion}). 
Application of Algorithm~\ref{alg:linear} to the oracle, who responds according to $\smmbf$, returns the maximizer and supporting hyperplane at that point. Since the true performance metric is linear, we take the elicited metric, $\hmmbf$, to be the slope of the resulting supporting hyperplane.
\subsection{Eliciting LFPMs}
\label{ssec:elicit_linearfrac}

An LFPM is given by \eqref{linear-fractional}, where $p_{11}, p_{00}, q_{11}$, and $q_{00}$ are not simultaneously zero. Also, it is bounded over $\Ccal$. As scaling and shifting does not change the linear-fractional form, \emph{WLOG}, we may take $\phi(C) \in [0, 1] \, \forall  C \in \Ccal$ with positive numerator and denominator.

\bassumption
Let $\phi \in \varphi_{LFPM}$ \eqref{linear-fractional}. We assume that $p_{11}, p_{00} \geq 0$, $p_{11} \geq q_{11}$, $p_{00} \geq q_{00}$, $p_0 = 0$, $q_0 = (p_{11} - q_{11})\zeta + (p_{00} - q_{00})(1 - \zeta)$, and $p_{11} + p_{00} = 1$.
\label{assump:sufficient}
\eassumption

\bprop
The conditions in Assumption \ref{assump:sufficient} are sufficient for $\phi \in \varphi_{LFPM}$ to be bounded in $[0,1]$ and simultaneously monotonically increasing in TP and TN.
\label{prop:sufficient}
\eprop

The conditions in Assumption~\ref{assump:sufficient} 
are reasonable as we want to elicit any unknown bounded, monotonically increasing LFPM. To no surprise, examples outlined in \eqref{ex:lf-examples} and Koyejo et al. \cite{koyejo2014consistent} satisfy these conditions. We first provide intuition for eliciting LFPMs (Figure~\ref{fig:me}). We obtain two hyperplanes: one at the maximizer on the upper boundary, and other at the minimizer on the lower boundary. This results in two nonlinear systems of equations (SoEs) having only one degree of freedom, but they are satisfied by the true unknown metric. 
Thus, the elicited metric is one where solutions to the two systems match pointwise on the confusion matrices. Formally, suppose that the oracle's metric is:

\begin{ceqn}
\begin{align}
\sphi(C) = \frac{\spone TP +  \spzero TN}{\sqone TP +  \sqzero TN +  \sqnot}. \nonumber
\end{align}
\end{ceqn}

Let $\btau^*$ and $\barbelow{\tau}^*$ be the maximum and minimum value of $\sphi$ over $\Ccal$, respectively, i.e., 
$\underline{\tau}^*\leq \phi^*(C)\leq \overline{\tau}^*\; \forall \; C\in \Ccal$. 
Under Assumption  \ref{as:eta}, we have a hyperplane 
\begin{ceqn}
\begin{align}
\bell_f^* := (\spone - \btau^*\sqone)tp +  (\spone - \btau^*\sqone)tn = \btau^*\sqnot \nonumber
\label{eq:lf-support}
\end{align}
\end{ceqn}
touching the set $\Ccal$ only at $(\oline{TP}^*, \oline{TN}^*)$ on the upper boundary $\partial \Ccal_+$.  
Similarly, we have a hyperplane
\begin{ceqn}
\bequation
{\tell}^*_f := (\spone - \ttau^* \sqone)tp +  (\spzero - \ttau^* \sqzero)tn = \ttau^* \sqnot, \nonumber
\label{eq:lf-lower-support}
\eequation
\end{ceqn}
which touches the set $\Ccal$ only at $(\barbelow{TP}^*, \barbelow{TN}^*)$ on the lower boundary $\partial \Ccal_-$. 
To help with intuition, see Figure \ref{fig:lin-fr}(c). 
Since LFPM is quasiconcave, Algorithm \ref{alg:linear} returns a hyperplane
${\bell := \bmone tp + \bmzero tn = \oline{C}_0}$,
where $\oline{C}_0 = \bmone \oline{TP}^* + \bmzero \oline{TN}^*$. This is equivalent to $\bell_f^*$ up to a constant multiple; therefore, the true metric is the solution to the following non-linear SoE:
\begin{ceqn}
\begin{align}
\spone - \btau^*\sqone = \alpha \bmone,  \spzero - \btau^*\sqzero = \alpha \bmzero, \nonumber 
  \btau^*\sqnot = \alpha \oline{C}_0, \nonumber 
\end{align}
\end{ceqn}
where $\alpha \geq 0$, because LHS and $\oline{m}$'s are non-negative. Additionally, we ignore the case when $\alpha = 0$, since this would imply a constant $\phi$. 
Next, we may divide the above equations by $\alpha > 0$ on both sides so that all the coefficients $\oline{p}^*$'s and $\oline{q}^*$'s are factored by $\alpha$. This does not change $\sphi$; thus, the SoE becomes:

\begin{align}
\ppone - \btau^*\pqone =  \bmone,  \ppzero - \btau^*\pqzero =  \bmzero, 
 \btau^*\pqnot =  \oline{C}_0.
\label{eq:lin-fr-equi}
\end{align}

Notice that none of the conditions in Assumption \ref{assump:sufficient} are changed except $\ppone + \ppzero = 1$. However, we may still use this condition to learn a constant $\alpha$ times the true metric, which does not harm the elicitation problem. 

As LFPM is also quasiconvex, Algorithm \hyperlink{alg:quasiconvex}{2} outputs a hyperplane 
${{\tell} := {\barbelow{m}}_{11} tp + {\barbelow{m}}_{00} tn = {\barbelow{C}}_0},$
where ${\barbelow{C}}_0 = {\barbelow{m}}_{11} {\barbelow{TP}}^* + {\barbelow{m}}_{00} {\barbelow{TN}}^*$. This is equivalent to ${\tell}^*_f$ up to a constant multiple; thus, the true metric is also the solution of the following SoE:
\begin{ceqn}
\begin{align}
\spone - {\ttau}^* \sqone = \gamma {\barbelow{m}}_{11}, 
\spzero - {\ttau}^* \sqzero = \gamma {\barbelow{m}}_{00},
 {\ttau}^* \sqnot = \gamma \barbelow{C}_0, \nonumber
\end{align}
\end{ceqn}
where $\gamma \leq 0$ since LHS is positive, but $\barbelow{m}$'s are negative. Again, we may assume $\gamma < 0$. By dividing the above equations by $-\gamma$ on both sides, all the coefficients ${p}^*$'s and ${q}^*$'s are factored by $-\gamma$. This does not change $\sphi$; thus, the system of equations becomes the following:
\begin{ceqn}
\begin{align}
\pppone - {\ttau}^* \ppqone =  {\barbelow{m}}_{11}, 
\pppzero - {\ttau}^* \ppqzero =  {\barbelow{m}}_{00}, 
  {\ttau}^* \ppqnot =  \barbelow{C}_0.
\label{eq:lin-fr-equi-lower}
\end{align}
\end{ceqn}

\bprop
Under Assumption~\ref{assump:sufficient}, knowing $p_{11}'$ solves the system of equations~\eqref{eq:lin-fr-equi} as follows:
\begin{ceqn}
\begin{align}
	p_{00}' &= 1 - p_{11}', \, \nonumber q_0' = \oline{C}_0\frac{P'}{Q'}, \nonumber \\
	q_{11}' &= (p_{11}' - \bmone)\frac{P'}{Q'}, \, q_{00}' = (p_{00}' - \bmzero)\frac{P'}{Q'}, 
\end{align}
\end{ceqn}
where $P' = p_{11}'\zeta + p_{00}'(1 - \zeta)$ and $Q' = P' + \oline{C}_0 - \bmone\zeta -  \bmzero(1 - \zeta)$. Thus, it elicits the LFPM.
\label{pr:solvesystem}
\eprop

\addtocounter{algorithm}{1}
\balgorithm[t]
\caption{Grid Search for Best Ratio}
\label{alg:grid-search}
\balgorithmic[1]
\STATE \textbf{Input:} $k, \Delta$. 
\STATE \textbf{Initialize:} $\sigma_{opt} = \infty, p_{11, opt}' = 0$.
\STATE Generate $C_1,...,C_k$ on $\partial C_+$ and $\partial C_-$ (Section \ref{sec:confusion}).
\FOR {($\ppone = 0$; $\ppone \leq 1$; $\ppone = \ppone + \Delta$)}
\STATE Compute $\pphi$, $\ppphi$ using Proposition \ref{pr:solvesystem}. Compute array $r = [\frac{\pphi(C_1)}{\ppphi(C_1)},...,\frac{\pphi(C_k)}{\ppphi(C_k)}]$. Set $\sigma = \text{std}(r).$\\
\STATE {\bf if }($\sigma < \sigma_{opt}$) Set $\sigma_{opt} = \sigma$ and $p_{11, opt}' = \ppone$.
\ENDFOR
\STATE\textbf{Output:} $p_{11, opt}'$.
\ealgorithmic
\ealgorithm
Now assume we know $p_{11}'$. Using Proposition~\ref{pr:solvesystem}, we may solve the system~\eqref{eq:lin-fr-equi} and obtain a metric, say $\phi'$. System~\eqref{eq:lin-fr-equi-lower} can be solved analogously, provided we know $p_{11}''$, to get a metric, say $\phi''$. Notice that when $\sfrac{\spone}{\spzero}=\sfrac{\ppone}{\ppzero}=\sfrac{\pppone}{\pppzero}$, then $\sphi (C) = \phi'(C)/\alpha = -\phi''(C)/\gamma$. This means that when the  true ratios of $p$'s are known, then $\phi'$, $\phi''$ are constant multiples of each other. So, to know the true $\ppone$ (or, $\pppone$) is to search the grid $[0,1]$ and select the one where the ratios of $\phi'$ and $\phi''$ are constant on a number of confusion matrices. Since we can generate many confusion matrices on $\partial \Ccal_+$ and $\partial \Ccal_-$ (vary $\delta$ in Definition~\ref{def:boundary}), we can estimate the ratio $p_{11}'$ to $p_{00}'$ using grid search based Algorithm~\ref{alg:grid-search}. We may then use Proposition~\ref{pr:solvesystem} for the output of Algorithm~\ref{alg:grid-search} and set the elicited metric $\hphi = \pphi$. 
Note that Algorithm~\ref{alg:grid-search} is independent of oracle queries and easy to implement, thus it is suitable for the purpose.
\section{Guarantees}
\label{sec:noise}

In this section, we discuss guarantees for the elicitation procedures (Section~\ref{sec:me}) in the presence of (a) confusion matrices' estimation noise from finite samples and (b) oracle feedback noise with the following notion.
\bdefinition
Oracle Feedback Noise $(\epsilon_\Omega\geq0)$: 
The oracle may provide wrong answers whenever $|\phi(C)-\phi(C')|<\epsilon_\Omega$. Otherwise, it provides correct answers. 
\edefinition

Simply put, if the confusion matrices are close as measured by $\phi$, then the oracle responses can be wrong. Moving forward to the guarantees, we make two assumptions which hold in most common settings.

\begin{assumption}\label{as:sup-norm-convergence}
	Let $\{\hat \eta_i(x)\}_{i=1}^{n}$ be a sequence of estimates of $\eta(x)$ depending on the sample size. 
	We assume that $\Vert\eta - \hat \eta_i\Vert_\infty \stackrel{P}{\to} 0$.
\end{assumption}
\begin{assumption}\label{as:low-weight-around-opt}
	For quasiconcave $\phi$, recall that the Bayes classifier 
	is of the form $h = \1[\eta(x)\geq \delta]$. Let $\oline{\delta}$ be the threshold that maximizes $\phi$. We assume that the probability that $\eta(X)$ lies near $\oline{\delta}$ is bounded from below and above. Formally, $k_0 \nu \leq \Pmbb\left[(\oline{\delta}-\eta(X))\in[0,\nu]\right], $
    $\Pmbb\left[(\eta(X)-\oline{\delta})\in[0,\nu]\right]\leq k_1 \nu$
    for any $0<\nu\leq\frac{2}{k_0}\sqrt{k_1\epsilon_\Omega}$ and some $k_1 \geq k_0 > 0$.
\end{assumption}
Assumption \ref{as:sup-norm-convergence} is arguably natural, as most estimation is parametric, where the function classes are sufficiently well behaved. Assumption \ref{as:low-weight-around-opt} ensures 
that near the optimal threshold $\oline{\delta}$, the values of $\eta(X)$ have bounded density. 
In other words, when $X$ has no point mass, the slope of $\eta(X)$ where it attains the optimal threshold $\oline{\delta}$ is neither vertical nor horizontal. We start with guarantees for the algorithms in their respective tasks.

\begin{theorem}\label{thm:quasi}
Given $\epsilon,\epsilon_\Omega \geq 0$ and a 1-Lipschitz metric $\phi$ that is monotonically increasing in TP, TN. If it is quasiconcave (quasiconvex) then Algorithm \ref{alg:linear} (Algorithm~\hyperlink{alg:quasiconvex}{2}) finds an approximate maximizer $\Cbar$ (minimizer $\barbelow{C}$). Furthemore, $(i)$ the algorithm returns the supporting hyperplane at that point, $(ii)$ 
the value of $\phi$ at that point is within $O(\sqrt{\epsilon_\Omega} +  \epsilon)$ of the optimum, and $(iii)$ the number of queries is $O(\log\frac1\epsilon)$.
\end{theorem}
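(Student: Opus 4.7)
The plan is to prove the three assertions in the theorem by leveraging (a) the unimodality from Lemma~\ref{lem:quasiconcave}, (b) the 1-Lipschitz hypothesis on $\phi$, and (c) Assumption~\ref{as:low-weight-around-opt}, which links the Bayes threshold $\delta$ linearly to movement along $\partial\Ccal_+$. I treat the quasiconcave case; the quasiconvex statement for Algorithm~\hyperlink{alg:quasiconvex}{2} follows by the mirrored argument on the lower boundary. For query complexity (iii), note that the loop begins with $[\theta_a,\theta_b]=[0,\pi/2]$ and every branch in lines 8--12 replaces the current interval by one of $[\theta_a,\theta_d]$, $[\theta_c,\theta_e]$, or $[\theta_d,\theta_b]$, each of length $\frac12(\theta_b-\theta_a)$, so termination at $|\theta_b-\theta_a|\le\epsilon$ occurs after $O(\log(1/\epsilon))$ rounds of $4$ queries each. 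For noiseless correctness, Lemma~\ref{lem:quasiconcave} tells us $\phi\circ\rho^+$ is unimodal in $\theta$; I would enumerate the monotone/peaked orderings the four pairwise comparisons can produce on $\theta_a<\theta_c<\theta_d<\theta_e<\theta_b$ and verify that in every case lines 8--12 retain an interval still containing the unique maximizer $\theta^*$. At termination $|\theta_d-\theta^*|\le\epsilon$; Assumption~\ref{as:low-weight-around-opt} makes $\theta\mapsto\Cbar_\theta$ Lipschitz along $\partial\Ccal_+$, so $\|\Cbar_{\theta_d}-\Cbar_{\theta^*}\|=O(\epsilon)$, and the 1-Lipschitz property of $\phi$ yields the $O(\epsilon)$ part of (ii).

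The main work is the noisy case. I propose the inductive invariant: after every iteration either (A) the retained interval still contains $\theta^*$, or (B) every $\Cbar_\theta$ in the retained interval is $O(\epsilon_\Omega)$-suboptimal in $\phi$. For the induction step, if at least one of the four pairwise $\phi$-gaps strictly exceeds $\epsilon_\Omega$, then that comparison is accurate, and I would show that this single correct comparison alone suffices to rule out the appropriate outer subinterval in the rule table of lines 8--12 by appealing to unimodality of $\phi\circ\rho^+$. The fallback clause of line 7 (out-of-order responses) fires only when no gap exceeds $\epsilon_\Omega$, in which case summing along the chain gives $|\phi(\Cbar_{\theta_a})-\phi(\Cbar_{\theta_b})|\le 4\epsilon_\Omega$, and unimodality delivers (B) directly. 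To convert (B) into a $\phi$-gap at the returned point, I use Assumption~\ref{as:low-weight-around-opt} together with the strict convexity of $\Ccal$ from Proposition~\ref{pr:strict-convex} to obtain a local quadratic lower bound $\phi(\Cbar_{\theta^*})-\phi(\Cbar_\theta)\ge c(\theta-\theta^*)^2$; inverting, a $\phi$-gap of $\epsilon_\Omega$ forces $\theta$-excursion $O(\sqrt{\epsilon_\Omega})$, and applying 1-Lipschitz once more gives the $O(\sqrt{\epsilon_\Omega})$ contribution, yielding the full $O(\sqrt{\epsilon_\Omega}+\epsilon)$ bound in (ii).

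Claim (i) is then immediate: the returned line $\bell=\langle\bmmbf,(tp,tn)\rangle=\langle\bmmbf,\Cbar\rangle$ is, by Equation~\eqref{eq:support} combined with the strict convexity in Proposition~\ref{pr:strict-convex}, the unique supporting hyperplane of $\Ccal$ tangent at $\Cbar_{\theta_d}$. The genuine difficulty in the whole argument is the local quadratic lower bound $\phi(\Cbar_{\theta^*})-\phi(\Cbar_\theta)\ge c(\theta-\theta^*)^2$ used in the noise step: deriving it from only the 1-Lipschitz hypothesis on $\phi$ together with Assumption~\ref{as:low-weight-around-opt} and the first-order condition along the strictly convex boundary---rather than from explicit second-derivative hypotheses---is where I expect the proof to need the most care, and it is where the $\sqrt{\epsilon_\Omega}$ rate ultimately comes from.
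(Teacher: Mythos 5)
Your overall plan tracks the paper's proof closely: the paper also handles (iii) by observing the interval halves per round, handles (i) via the $\theta\mapsto\bmmbf\mapsto\bell$ correspondence from Section~\ref{ssec:parametrization}, and gets the $\sqrt{\epsilon_\Omega}$ term in (ii) from exactly the local quadratic bound you anticipate, namely $\phi(C_{\bar\delta})-\phi(C_{\delta'})\ge \frac{k_0^2}{4k_1}(\bar\delta-\delta')^2$, derived in the proof of Theorem~\ref{thm:linear} from Assumption~\ref{as:low-weight-around-opt} (not from strict convexity of $\Ccal$ --- that enters only for (i)). The paper then works in the $\delta$-coordinate, bounds $|TP(C_{\theta_0})-TP(C_{\theta'})|$ by $\Pmbb[(\bar\delta-\eta(X))\le\epsilon+\tfrac{2}{k_0}\sqrt{k_1\epsilon_\Omega}]\le k_1\epsilon + \tfrac{2k_1}{k_0}\sqrt{k_1\epsilon_\Omega}$, does the same for $TN$, and finishes with 1-Lipschitzness of $\phi$. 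So the three ingredients you list are the right ones and are the ones used.

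Where your proposal has a concrete problem is the invariant (B), ``every $\Cbar_\theta$ in the retained interval is $O(\epsilon_\Omega)$-suboptimal,'' and the step that derives it: you argue that if all four consecutive $\phi$-gaps are $\le\epsilon_\Omega$ then the whole interval is $O(\epsilon_\Omega)$-suboptimal by unimodality. This does not follow. The true maximizer $\theta^*$ generically lies strictly between two query points, and the gaps $|\phi(\Cbar_{\theta_c})-\phi(\Cbar_{\theta_d})|$ between \emph{neighbours} being small in no way controls the peak $\phi(\Cbar_{\theta^*})-\phi(\Cbar_{\theta_d})$ --- a narrow, tall spike between $\theta_c$ and $\theta_d$ is consistent with both quasiconcavity and small neighbour-to-neighbour gaps. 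The 1-Lipschitz hypothesis does cap the spike at $O(\theta_d-\theta_c)$, but that is $O(|\theta_b-\theta_a|)$, not $O(\epsilon_\Omega)$, so invariant (B) fails in any early iteration where the interval is still large. What actually holds (and what your argument implicitly needs) is the weaker statement that the interval always contains a point whose $\phi$-value is within $O(\sqrt{\epsilon_\Omega})$ of the optimum; combined with the eventual $\epsilon$-width and the Lipschitz parametrization, that gives (ii). The paper sidesteps the invariant bookkeeping by instead asserting that the binary search converges (in the $\epsilon\to 0$ limit) to a target angle $\theta_0$ with $|\phi(C_{\bar\theta})-\phi(C_{\theta_0})|<\epsilon_\Omega$ --- which is also stated rather than derived rigorously --- and then plugs this into the quadratic lower bound to get $|\bar\delta-\delta_0|=O(\sqrt{\epsilon_\Omega})$. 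So the hand-wave is shared, but the specific form of your invariant is, as written, false, and the ``one accurate comparison rules out the correct outer subinterval'' claim also needs case-by-case verification against lines 8--12 (a single accurate comparison in a chain of four does not by itself determine the branch). If you rephrase (B) as ``the near endpoint of the retained interval is $O(\sqrt{\epsilon_\Omega})$-suboptimal'' and argue the later iterations cannot drift further than $O(\sqrt{\epsilon_\Omega})$ in $\theta$, you recover a correct argument that matches the paper's spirit.

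One smaller slip: you credit the Lipschitz behaviour of $\theta\mapsto\Cbar_\theta$ and the quadratic lower bound jointly to ``Assumption~\ref{as:low-weight-around-opt} together with strict convexity.'' In the paper the upper-density bound in Assumption~\ref{as:low-weight-around-opt} alone gives the Lipschitz property of the boundary parametrization (via $\Pmbb[\eta\in[\delta,\delta']]\le k_1|\delta-\delta'|$), and the lower-density bound together with the first-order optimality condition $\bar\delta=m_{00}/(m_{11}+m_{00})$ gives the quadratic bound; strict convexity of $\Ccal$ is needed only for uniqueness of the supporting hyperplane in part (i).
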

\blemma
\label{lem:lower-bound}
Under our model, no algorithm can find the maximizer (minimizer) in fewer than~$O(\log\frac1\epsilon)$ queries.
\elemma
Theorem~\ref{thm:quasi} and Lemma~\ref{lem:lower-bound}, guarantee that Algorithm~\ref{alg:linear} (Algorithm \hyperlink{alg:quasiconvex}{2}), for a quasiconcave (quasiconvex) metric, finds a confusion matrix and a hypeplane which is close to the true maximizer (minimizer) and its associated supporting hyperplane, using just the optimal number of queries. Further, since binary search always tends towards the optimal whenever responses are correct, the algorithms necessarily terminate within a confidence interval of the true maximizer. Thus, we can take $\epsilon$ sufficiently small so that the only error that arises is due to the feedback noise $\epsilon_\Omega$. Now, we present our main result which guarantees effective LPM elicitation. 
Guarantees in LFPM elicitation follow naturally as discussed in the proof of Theorem~\ref{thm:linear}  (Appendix~\ref{appendix:proofs}).

\btheorem\label{thm:linear}
Let $\varphi_{LPM} \ni \sphi = \smmbf$ be the true performance metric. Under Assumption~\ref{as:low-weight-around-opt}, given $\epsilon > 0$, LPM elicitation (Section~\ref{ssec:elicit_linear}) outputs a 
performance metric $\hphi = \hmmbf$, such that $\norm{\smmbf - \hmmbf}_\infty \leq \sqrt{2}\epsilon + \frac 2{k_0}\sqrt{2k_1\epsilon_\Omega}$.
\etheorem

So far, we assumed access to the confusion matrices. However, in practice, we need to estimate them using samples $\{(x_i, y_i)\}_{i=1}^{n}$. 
We now discuss robustness of the algorithms working with samples. 
 Recall that, as a standard consequence of Chernoff-type bounds~\cite{boucheron2013concentration}, sample estimates of true-positive and true-negative are consistent estimators. 
Therefore, with high probability, we can estimate the confusion matrix within any desired tolerance, provided we have sufficient samples.  This implies that we can also estimate the $\phi$ values within any tolerance since LPM and and LFPM are 1-Lipschitz due to \eqref{set:lfpm} and  Assumption \ref{assump:sufficient}, respectively. 
Thus, with high probability, the elicitation procedures gather correct oracle's preferences within feedback noise $\epsilon_\Omega$. 
Further, we may prove the following lemma which allow us to control the 
 error in optimal classifiers from using the  
 estimated $\hat\eta(x)$ rather than the true $\eta(x)$.
 
\begin{lemma}\label{lem:sample-Cs-optimize-well}
Let $h_{\theta}$ and $\hat h_{\theta}$ be two classifiers estimated using $\eta$ and $\hat\eta$, respectively. 
Further, let ${\oline{\theta}}$ be such that $h_{{\oline{\theta}}} = \argmax_{\theta}\phi(h_{\theta})$. Then
	${\Vert C(\hat h_{{\oline{\theta}}}) - C(h_{{\oline{\theta}}}) \Vert_\infty=O( \Vert {\hat\eta}_n-\eta\Vert_\infty})$.
\end{lemma}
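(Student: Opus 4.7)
The plan is to reduce the statement to a standard threshold perturbation argument on $\eta$, and then to invoke Assumption~\ref{as:low-weight-around-opt} to convert the sup-norm gap $\|\hat\eta_n-\eta\|_\infty$ into a bound on the probability of the symmetric difference between the two classifiers' acceptance regions.

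First I would unpack the structure of $h_{\bar\theta}$ and $\hat h_{\bar\theta}$. By Definition~\ref{def:boundary} and Proposition~\ref{pr:bayeslinear}, any quasiconcave maximizer lies on $\partial\Ccal_+$ and so corresponds to $\theta$ with $m_{11}+m_{00}\geq 0$; hence $h_{\bar\theta}(x)=\1[\eta(x)\geq \bar\delta]$ and $\hat h_{\bar\theta}(x)=\1[\hat\eta(x)\geq \bar\delta]$ for the same threshold $\bar\delta:=m_{00}/(m_{11}+m_{00})$ (and analogously for the quasiconvex/lower-boundary case, with the threshold flipped, which is treated identically by swapping roles). Thus the two classifiers differ only on the set
\[
	A:=\{x:\1[\eta(x)\geq \bar\delta]\neq \1[\hat\eta(x)\geq \bar\delta]\}.
\]

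Next, I would observe the elementary containment
\[
	A\subseteq \bigl\{x:|\eta(x)-\bar\delta|\leq \|\hat\eta_n-\eta\|_\infty\bigr\},
\]
because if $\eta(x)\geq\bar\delta$ but $\hat\eta_n(x)<\bar\delta$ (or vice versa), then $\eta(x)$ must sit within distance $\|\hat\eta_n-\eta\|_\infty$ of $\bar\delta$. Using Assumption~\ref{as:low-weight-around-opt} with $\nu=\|\hat\eta_n-\eta\|_\infty$ (which is valid once $n$ is large enough that $\nu\leq \tfrac{2}{k_0}\sqrt{k_1\epsilon_\Omega}$, guaranteed by Assumption~\ref{as:sup-norm-convergence}), the two one-sided bounds give
\[
	\Pmbb[A]\leq 2k_1\,\|\hat\eta_n-\eta\|_\infty.
\]

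Finally I would translate this into the confusion-matrix gap. For every entry, e.g.\ for $TP$,
\[
	|TP(\hat h_{\bar\theta})-TP(h_{\bar\theta})|
	\leq \Pmbb\bigl[Y=1,\,x\in A\bigr]\leq \Pmbb[A],
\]
and identically for $FP,FN,TN$. Taking the maximum over the four entries and applying the bound above yields $\|C(\hat h_{\bar\theta})-C(h_{\bar\theta})\|_\infty\leq 2k_1\,\|\hat\eta_n-\eta\|_\infty=O(\|\hat\eta_n-\eta\|_\infty)$, as claimed.

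\textbf{Main obstacle.} The only subtlety is verifying that the threshold form of the Bayes classifier at the relevant $\bar\theta$ uses the \emph{same} $\bar\delta$ for both $\eta$ and $\hat\eta$ (i.e., that $\bar\theta$ is defined by the metric, not by the data), and that we are in the regime where Assumption~\ref{as:low-weight-around-opt} applies. Everything else is a one-line symmetric-difference argument; the quasiconvex/lower-boundary counterpart is handled by the same reasoning with the inequality in the threshold reversed.
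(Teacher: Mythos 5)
Your proof is correct and follows essentially the same route as the paper's: both observe that $h_{\bar\theta}$ and $\hat h_{\bar\theta}$ share the metric-determined threshold $\bar\delta$, that they disagree only where $\eta(x)$ lies within $\|\hat\eta_n-\eta\|_\infty$ of $\bar\delta$, and that Assumption~\ref{as:low-weight-around-opt} bounds the mass of that region linearly in $\|\hat\eta_n-\eta\|_\infty$. The paper splits the disagreement region into a ``loss'' side and a ``gain'' side and bounds each by $k_1\|\hat\eta-\eta\|_\infty$ (yielding the tighter constant $k_1$ rather than your $2k_1$, since at most one sign of error can dominate at a time), but this is a cosmetic difference that does not affect the $O(\cdot)$ claim; your symmetric-difference framing is arguably cleaner, and your explicit remark that Assumption~\ref{as:sup-norm-convergence} ensures $\nu$ eventually falls within the range where Assumption~\ref{as:low-weight-around-opt} applies is a detail the paper states more loosely.
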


The errors due to using $\hat\eta$, instead of true $\eta$ may propel in the results discussed earlier, however, only in the bounded sense. This shows that our elicitation approach is robust to feedback and finite sample noise.
\section{Experiments}
\label{experiments}

In this section, we empirically validate the theory and investigate the sensitivity due to sample estimates.

\subsection{Synthetic Data Experiments}
\label{ssec:theoryexp}

We assume a joint probability for $\Xcal = [-1,1]$ and $\Ycal = \{0, 1\}$ given by $f_X = \Umbb[-1,1]$ and $\eta(x) = \frac{1}{1 + e^{ax}}$, where $\Umbb[-1,1]$ is the uniform distribution on $[-1, 1]$, and $a$ is a parameter controlling the degree of noise in the labels. We fix $a = 5$ in our experiments. In the LPM elicitation case, we define a true metric $\sphi$ by $\smmbf = (\smone, \smzero)$. This defines the query outputs in line 6 of Algorithm \ref{alg:linear}. Then we run Algorithm \ref{alg:linear} to check whether or not we get the same metric. The results for both monotonically increasing and monotonically decreasing LPM are shown in Table~\ref{tab:app:LPMtheory}. We achieve the true metric even for very tight tolerance $\epsilon=0.02$ radians.

\begin{table}
	\caption{Empirical Validation for LPM elicitation at tolerance $\epsilon = 0.02$ radians. $\sphi$ and $\hphi$ denote the true and the elicited metric, respectively.}
	\label{tab:app:LPMtheory}
	\begin{center}
		\begin{small}
				\begin{tabular}{|c|c|c|c|}
					\hline
					  $\sphi = \smmbf$ & $\hphi = \hmmbf$ & $\sphi = \smmbf$ & $\hphi = \hmmbf$ \\ \hline 
					(0.98,0.17) & (0.99,0.17) & (-0.94,-0.34) & (-0.94,-0.34) \\
					(0.87,0.50) & (0.87,0.50) &(-0.77,-0.64)& (-0.77,-0.64)  \\
					(0.64,0.77) & (0.64,0.77) & (-0.50,-0.87) & (-0.50,-0.87)  \\
					(0.34,0.94) &(0.34,0.94) &(-0.17,-0.98) & (-0.17,-0.99 ) \\
					\hline
				\end{tabular}
		\end{small}
	\end{center}
\end{table}

\begin{table*}
	\caption{LFPM Elicitation for synthetic distribution (Section \ref{ssec:theoryexp}) and Magic (\textsc{M}) dataset  (Section \ref{ssec:app:realexp}) with $\epsilon = 0.05$ radians. $(\spone, \spzero), (\sqone, \sqzero, \sqnot)$ denote the true LFPM. $(\hpone, \hpzero), (\hqone, \hqzero, \hqnot)$ denote the elicited LFPM. $\alpha$ and $\sigma$ denote the mean and the standard deviation in the ratio of the elicited to the true metric (evaluated on the confusion matrices in $\partial\Ccal_+$ used in Algorithm~\ref{alg:grid-search}), respectively. We empirically verify that the elicited metric is constant multiple ($\alpha$) of the true metric.}
	\label{tab:app:LFPMtheoryreal}

	\begin{center}
		\begin{small}
            \resizebox{\textwidth}{!}{%
			\begin{tabular}{|c|c|c|c|c|c|c|}
				\hline
                
				True Metric & \multicolumn{3}{|c|}{Results on Synthetic Distribution (Section \ref{ssec:theoryexp})} & \multicolumn{3}{|c|}{Results on Real World Dataset \textsc{M}  (Section \ref{ssec:app:realexp})} \\ \hline
				$(\spone, \spzero), (\sqone, \sqzero, \sqnot)$ & $(\hpone, \hpzero), (\hqone, \hqzero, \hqnot)$ & $\alpha$ & $\sigma$ & $(\hpone, \hpzero), (\hqone, \hqzero, \hqnot)$ & $\alpha$ & $\sigma$ \\
				\hline
				(1.00,0.00),(0.50,-0.50,0.50) & (1.00,0.00),(0.25,-0.75,0.75) & 0.92 & 0.03  & (1.00,0.00),(0.25,-0.75,0.75) & 0.90 & 0.06 \\
				(1.0,0.0),(0.8,-0.8,0.5) & (1.0,0.0),(0.73,-1.09,0.68) & 0.94 & 0.02& (1.0,0.0),(0.72,-1.13, 0.57) & 1.06 & 0.05 \\
				(0.8,0.2),(0.3,0.1,0.3)  & (0.86,0.14),(-0.13,-0.07, 0.60) & 0.90 & 0.06 & (0.23,0.77),(-0.87,0.66,0.76) & 0.84  & 0.09\\
				(0.60,0.40),(0.40,0.20,0.20) & (0.67,0.33),(-0.07,-0.44,76) & 0.82 & 0.05 & (0.16,0.84),(-0.89,0.25,0.89) & 0.65 & 0.05\\
				(0.40,0.60),(-0.10,-0.20,0.65) & (0.36,0.64),(-0.21,-0.25,0.73) & 0.97 & 0.01 & (0.08,0.92),(-0.75,0.12,0.82) & 0.79 & 0.08\\
				(0.20,0.80),(-0.40,-0.20,0.80) & (0.12, 0.88),(-0.43, 0.002, 0.71) & 1.02 & 0.006 & (0.19,0.81),(-0.38,-0.13,0.70) & 1.02 & 0.004  \\
				\hline
			\end{tabular}}
		\end{small}
	\end{center}
\end{table*}

Next, we elicit LFPM. We define a true metric $\sphi$ by $\{(\spone, \spzero), (\sqone, \sqzero, \sqnot)\}$.  Then, we run Algorithm \ref{alg:linear} with $\epsilon=0.05$ to find the hyperplane $\bell$ and maximizer on $\partial C_+$, Algorithm~\hyperlink{alg:quasiconvex}{2} with $\epsilon=0.05$ to find the hyperplane $\tell$ and minimizer on $\partial C_-$, and Algorithm \ref{alg:grid-search} with $n = 2000$ (1000 confusion matrices on both $\partial\Ccal_+$ and $\partial\Ccal_-$ obtained by varying parameter $\theta$ uniformly in $[0, \pi/2]$ and $[\pi, 3\pi/2]$) and $\Delta = 0.01$. This gives us the elicited metric $\hphi$, which we represent by $\{(\hpone, \hpzero), (\hqone, \hqzero, \hqnot)\}$.
In Table \ref{tab:app:LFPMtheoryreal}, we present the elicitation results for LFPMs (column 2). We also present the mean ($\alpha$) and the standard deviation ($\sigma$) of the ratio of the elicited metric $\hphi$ to the true metric $\phi$ over the set of confusion matrices (column 3 and 4 of Table \ref{tab:app:LFPMtheoryreal}). Furthermore, if we know the true ratio of $\sfrac{\spone}{\spzero}$, then we can elicit the LFPM up to a constant by only using Algorithm $\ref{alg:linear}$ resulting in better estimate of the true metric, because we avoid errors due to Algorithms~\hyperlink{alg:quasiconvex}{2} and~\ref{alg:grid-search}. Line 1 and line 2 of Table \ref{tab:app:LFPMtheoryreal} represent $F_1$ measure and $F_\frac{1}{2}$ measure, respectively. In both the cases, we assume the knowledge of $p_{11}^*=1$. Line 3 to line 6 correspond to some arbitrarily  chosen linear fractional metrics to show the efficacy of the proposed method. For a better judgment, we show function evaluations of the true metric and the elicited metric on selected pairs of $(TP, TN) \in \partial\Ccal_+$ (used for Algorithm \ref{alg:grid-search}) in Figure \ref{fig:app:lfpm-theory}. The true and the elicited metric are plotted together after sorting values based on slope parameter $\theta$. It is clear that the elicited metric is a constant multiple of the true metric. The vertical solid line in red and dashed line in black corresponds to the \emph{argmax} of the true and the elicited metric, respectively. In Figure \ref{fig:app:lfpm-theory}, we see that the \emph{argmax} of the true and the elicited metrics coincides, thus validating Theorem~\ref{thm:quasi}.

\begin{figure*}[t]
	\centering
	\subfigure[Table \ref{tab:app:LFPMtheoryreal}, Line 1, Column 2]{
		{\includegraphics[width=5cm]{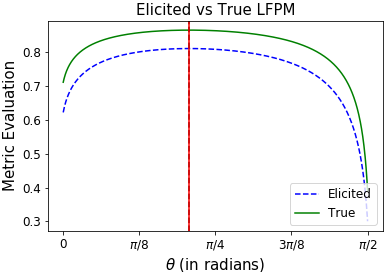}}
		\label{fig:app:lfpm_1}
	}
	\subfigure[Table \ref{tab:app:LFPMtheoryreal}, Line 2, Column 2]{
		{\includegraphics[width=5cm]{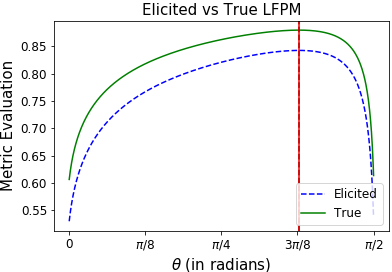}}
		\label{fig:app:lfpm_2}
	}
	\subfigure[Table \ref{tab:app:LFPMtheoryreal}, Line 3, Column 2]{
		{\includegraphics[width=5cm]{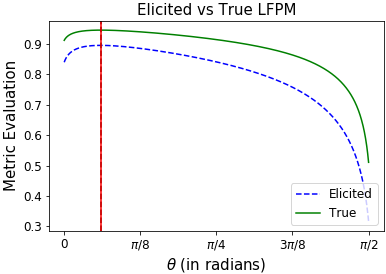}}
		\label{fig:app:lfpm_3}
	}
	\subfigure[Table \ref{tab:app:LFPMtheoryreal}, Line 4, Column 2]{
		{\includegraphics[width=5cm]{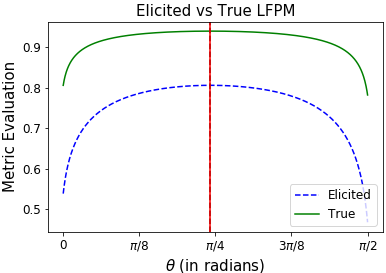}}
		\label{fig:app:lfpm_4}
	}
	\subfigure[Table \ref{tab:app:LFPMtheoryreal}, Line 5, Column 2]{
		{\includegraphics[width=5cm]{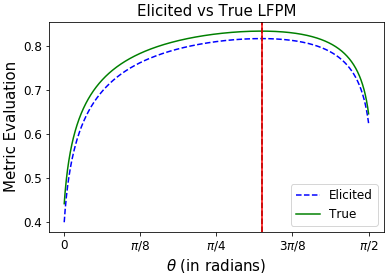}}
		\label{fig:app:lfpm_5}
	}
	\subfigure[Table \ref{tab:app:LFPMtheoryreal}, Line 6, Column 2]{
		{\includegraphics[width=5cm]{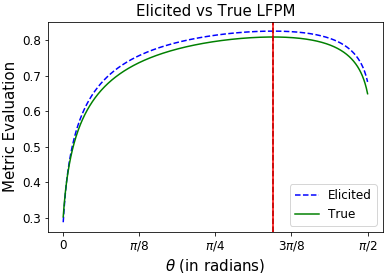}}
		\label{fig:app:lfpm_6}
	}
	\caption{True and elicited LFPMs for synthetic distribution from Table \ref{tab:app:LFPMtheoryreal}. The solid green curve and the dashed blue curve are the true and the elicited metric, respectively. The solid red and the dashed black vertical lines represent the maximizer of the true metric and the elicited metric, respectively. We see that the elicited LFPMs are constant multiple of the true metrics with the same maximizer (solid red and dashed black vertical lines overlap).}
	\label{fig:app:lfpm-theory}
\end{figure*}

\subsection{Real-World Data Experiments}
\label{ssec:app:realexp}

\begin{figure*}[t]
	\centering 
	\subfigure[Table \ref{tab:app:LFPMtheoryreal}, Line 1, Column 5]{
		{\includegraphics[width=5cm]{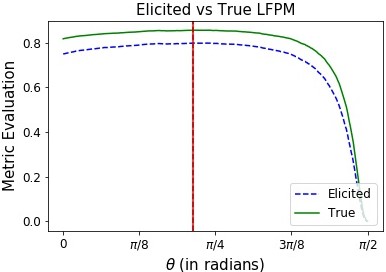}}
    \label{fig:app:lfpm_1_magic}
    }
	\subfigure[Table \ref{tab:app:LFPMtheoryreal}, Line 2, Column 5]{
		{\includegraphics[width=5cm]{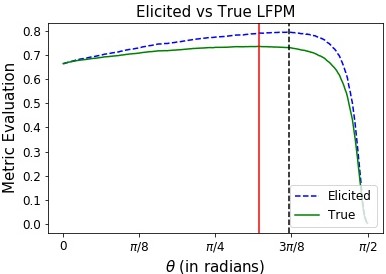}}
		\label{fig:app:lfpm_2_magic}
	}
	\subfigure[Table \ref{tab:app:LFPMtheoryreal}, Line 3, Column 5]{
		{\includegraphics[width=5cm]{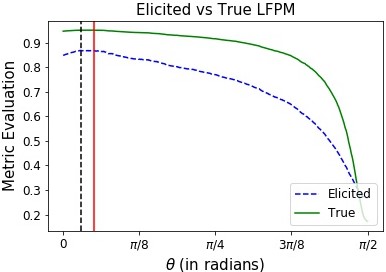}}
		\label{fig:app:lfpm_3_magic}
	}
	\subfigure[Table \ref{tab:app:LFPMtheoryreal}, Line 4, Column 5]{
		{\includegraphics[width=5cm]{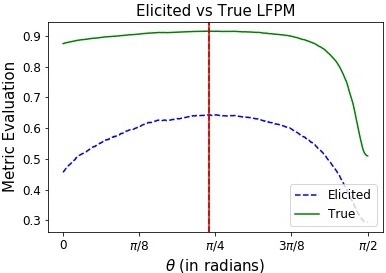}}
		\label{fig:app:lfpm_4_magic}
	}
	\subfigure[Table \ref{tab:app:LFPMtheoryreal}, Line 5, Column 5]{
		{\includegraphics[width=5cm]{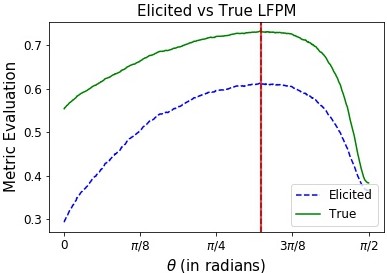}}
		\label{fig:app:lfpm_5_magic}
	}
	\subfigure[Table \ref{tab:app:LFPMtheoryreal}, Line 6, Column 5]{
		{\includegraphics[width=5cm]{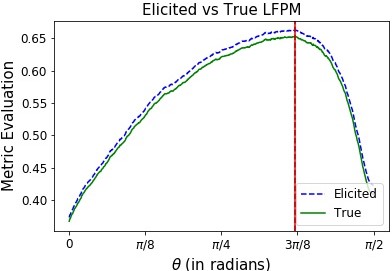}}
		\label{fig:app:lfpm_6_magic}
	}
	\caption{True and elicited LFPMs for dataset \textsc{M} from Table \ref{tab:app:LFPMtheoryreal}. The solid green curve and the dashed blue curve are the true and the elicited metric, respectively. The solid red and the dashed black vertical lines represent the maximizer of the true metric and the elicited metric, respectively. We see that the elicited LFPMs are constant multiple of the true metrics with almost the same maximizer (solid red and dashed black vertical lines overlap except for two cases).}
	\label{fig:app:lfpm-real}
\end{figure*}

In real-world datasets, we do not know $\eta(x)$ and only have finite samples. 
As a result of these two road blocks, the feasible space $\Ccal$ is not as well behaved as shown in Figure \ref{fig:fs_cm}, and poses a good challenge for the elicitation task. 
Now, we validate the elicitation procedure with two real-world datasets.

The datasets are: (a) \textsc{Breast Cancer (BC)} Wisconsin Diagnostic dataset \cite{street1993nuclear} containing 569 instances, and  (b) \textsc{Magic (M)} dataset \cite{dvovrak2007softening} containing 19020 instances. For both the datasets, we standardize the attributes and split the data into two parts $\Scal_1$ and $\Scal_2$. On $\Scal_1$, we learn an estimator $\hat{\eta}$ using regularized logistic regression model with regularizing constant $\lambda=10$ and $\lambda=1$. We use $\Scal_2$ for making predictions and computing sample confusion matrices. 

We generated twenty eight different LPMs $\sphi$ by generating $\theta^*$ (or say, $\smmbf = (\cos{\theta}^*, \sin{\theta}^*))$. Fourteen from the first quadrant starting from $\pi/18$ radians to $5\pi/12$ radians in step of $\pi/36$ radians. Similarly, fourteen from the third quadrant starting from $19\pi/18$ to $17\pi/12$ in step of $\pi/36$ radians. We then use Algorithm~\ref{alg:linear} (Algorithm~\hyperlink{alg:quasiconvex}{2} for different tolerance $\epsilon$, for different datasets, and for different regularizing constant $\lambda$ in order to recover the estimate $\hat{\mmbf}$. We compute the error in terms of the proportion of the number of times when Algorithm \ref{alg:linear} (Algorithm~\hyperlink{alg:quasiconvex}{2}) failed to recover the true ${\smmbf}$ within $\epsilon$ threshold. 

\begin{table}[t]
	\caption{LPM elicitation results on real datasets ($\epsilon$ in radians). \textsc{M} and \textsc{BC} represent Magic and Breast Cancer dataset, respectively. $\lambda$ is the regularization parameter in the regularized logistic regression models. The table shows error in terms of the proportion of the number of times when Algorithm \ref{alg:linear} (Algorithm 2) failed to recover the true ${\smmbf} (\theta^*)$ within $\epsilon$ threshold. The observations made in the main paper are consistent for both the regularized models.}
	\label{tab:app:LPMreal}
	\begin{center}
		\begin{small}
			\begin{tabular}{|c|c|c|c|c|}
				\hline
				&
				\multicolumn{2}{|c|}{$\lambda = 10$} &  \multicolumn{2}{|c|}{$\lambda = 1$} \\ \hline
				$\epsilon$ & \textsc{M}  & \textsc{BC} & \textsc{M}  & \textsc{BC} \\
				\hline
				0.02 & 0.57 & 0.79  & 0.54 & 0.79 \\
				0.05 & 0.14  & 0.43  & 0.36 & 0.64 \\
				0.08 & 0.07  & 0.21  & 0.14 & 0.57 \\
				0.11 & 0.00  & 0.07  & 0.07 & 0.43 \\
				\hline
			\end{tabular}
		\end{small}
	\end{center}
\end{table}

We report our results in Table \ref{tab:app:LPMreal}. We see improved elicitation for dataset $M$, suggesting that ME improves with larger datasets. In particular, for dataset $M$, we elicit all the metrics within threshold $\epsilon = 0.11$ radians. We also observe that $\epsilon = 0.02$ is an overly tight tolerance for both the datasets leading to many failures. This is because the elicitation routine gets stuck at the closest achievable confusion matrix from finite samples, which need not be optimal within the given (small) tolerance. Furthermore, both of these observations are consistent for both the regularized  logisitic regression models with regularizer $\lambda$. 

Next, we discuss the case of LFPM elicitation. We use the same true metrics $\sphi$ as described in Section \ref{ssec:theoryexp} and follow the same process for eliciting LFPM, but this time we work with \textsc{MAGIC} dataset.  
In Table \ref{tab:app:LFPMtheoryreal} (columns 5, 6, and 7), we present the elicitation results on \textsc{MAGIC} dataset along with the mean $\alpha$ and the standard deviation $\sigma$ of the ratio of the elicited metric and the true metric. Again, for a better judgment, we show the function evaluation of the true metric and the elicited metric on the selected pairs of $(TP, TN) \in \partial\Ccal_+$ (used for Algorithm \ref{alg:grid-search}) in Figure \ref{fig:app:lfpm-real}, ordered by the parameter $\theta$. Although we do observe that the \emph{argmax} is different in two out of six cases (see Sub-figure \subref{fig:app:lfpm_2_magic} and Sub-figure \subref{fig:app:lfpm_3_magic}) due to finite sample estimation, elicited LFPMs are almost equivalent to the true metric up to a constant. 
\section{RELATED WORK}
\label{sec:discussion}

Our work may be compared to ranking from pairwise comparisons \cite{wauthier2013efficient}. However, we note that our results depend on novel geometric ideas on the space of confusion matrices. Thus, instead of a ranking problem, we show that ME in standard models can be reduced to just finding the maximizer (and minimizer) of an unknown function which in turn yields the true metric -- resulting in low query complexity. A direct ranking approach adds unnecessary complexity to achieve the same task. Further, in contrast to our approach, most large margin ordinal regression based ranking \cite{herbrich2000large} fail to control which samples are queried. There is another line of work, which actively controls the query samples for ranking, e.g., \cite{jamieson2011active}. However, to our knowledge, this requires that the number of objects is finite and finite dimensional -- thus cannot be directly applied to ME without significant modifications, e.g. 
exploiting confusion matrix properties, as we have. Learning a performance metric which correlates with human preferences has been studied before \cite{janssen2007meta, peyrard2017learning}; however, these studies learn a regression function over some predefined features which is fundamentally different from our problem. Lastly, while \cite{caruana2004data, ferri2009experimental} address how one might qualitatively choose between metrics, none addresses our central contribution -- a principled approach for eliciting the ideal metric from user feedback.

\section{CONCLUSION}
\label{sec:conclusion}
 
We conceptualize \emph{metric elicitation} and elicit linear and linear-fractional metrics using preference feedback over pairs of classifiers. We propose provably query efficient and robust algorithms which exploit key properties of the set of confusion matrices. 
In future, we plan to explore metric elicitation beyond binary classification.
\bibliographystyle{plain}
\bibliography{binaryMetrics}
	
\cleardoublepage
    
\renewcommand{\thesection}{\Alph{section}}
\setcounter{section}{0}
\newcommand{\pb}{\vspace*{-\parskip}\noindent\rule[0.5ex]{\linewidth}{1pt}}
\begin{appendices}

\section{Visualizing the Set of Confusion Matrices}
\label{appendix:visualization}
To clarify the geometry of the feasible set, we visualize one instance of the set of confusion matrices $\Ccal$ using the dual representation of the supporting hyperplanes. This contains the following steps. 
\benumerate[wide, labelwidth=!, labelindent=0pt]
\item \emph{Population Model:} We assume a joint probability for $\Xcal = [-1,1]$ and $\Ycal = \{0, 1\}$ given by
\begin{ceqn}
\begin{equation}
f_X = \Umbb[-1,1] \quad \text{and} \quad \eta(x) = \frac{1}{1 + e^{ax}},
\label{prob-dist}
\end{equation}
\end{ceqn}
where $\Umbb[-1,1]$ is the uniform distribution on $[-1, 1]$ and $a>0$ is a parameter controlling the degree of noise in the labels. If $a$ is large, then with high probability, the true label is $1$ on [-1, 0] and $0$ on [0, 1]. On the contrary, if $a$ is small, then there are no separable regions and the classes are mixed in $[-1,1]$.  

Furthermore, the integral $\int_{-1}^{1} \frac{1}{1 + e^{ax}}dx = 1$ for $a \in \Rmbb$ implying $ \Pmbb(Y = 1) = \zeta = \frac{1}{2} \; \forall \; a \in \Rmbb$.
\item \emph{Generate Hyperplanes:} Take $\theta \in [0, 2\pi]$ and set $\mmbf = (m_{11}, m_{00}) = (\cos\theta, \sin\theta)$. Let us denote $x'$ as the point where the probability of positive class $\eta(x)$ is equal to the optimal threshold of Proposition \ref{pr:bayeslinear}. Solving for $x$ in the equation $1/(1 + e^{ax}) = m_{00}/(m_{00} + m_{11})$ gives us
\begin{ceqn}
\begin{align}
x' &= \Pi_{[-1, 1]} \big\{\tfrac{1}{a}\ln\big(\tfrac{m_{11}}{m_{00}}\big)\big\},
\end{align}
\end{ceqn}
where $\Pi_{[-1,1]} \{z\}$ is the projection of $z$ on the interval $[-1,1]$. If $m_{11} + m_{00} \geq 0$, then the Bayes classifier $\hbar$ predicts class $1$ on the region $[-1, x']$ and $0$ on the remaining region. If $m_{11} + m_{00} < 0$, $\hbar$ does the opposite. Using the fact that $Y | X$ and $\hbar | X$ are independent, we have that
\begin{enumerate}[wide, labelwidth=!, labelindent=0pt]
	\item if $m_{11} + m_{00} \geq 0$, then 
	$$\oline{TP}_\mmbf = \frac{1}{2} \textstyle \int\limits_{-1}^{{x'}} \frac{1}{1 + e^{ax}}dx, \qquad \oline{TN}_\mmbf = \frac{1}{2} \int\limits_{{x'}}^{1} \frac{e^{ax}}{1 + e^{ax}}dx.$$
	\item if $m_{11} + m_{00} < 0$, then 
	$$\oline{TP}_\mmbf = \frac{1}{2} \textstyle\int\limits_{{x'}}^{1} \frac{1}{1 + e^{ax}}dx, \qquad \oline{TN}_\mmbf = \frac{1}{2} \int\limits_{-1}^{{x'}} \frac{e^{ax}}{1 + e^{ax}}dx.$$

Now, we can obtain the hyperplane as defined in \eqref{eq:support} for each $\theta$. 
We sample around thousand $\theta 's \in [0, 2\pi]$ 
randomly, obtain the hyperplanes following the above process, and plot them.
\end{enumerate}
The sets of feasible confusion matrices $\Ccal$'s for $ a = 0.5, 1, 2, 5, 10$, and $50$ are shown in Figure \ref{fig:fs_cm}. The middle white region is $\Ccal$: the intersection of the half-spaces associated with its supporting hyperplanes. 
The curve on the right corresponds to the confusion matrices on the upper boundary $\partial\Ccal_+$. Similarly, the curve on the left corresponds to the confusion matrices on the lower boundary $\partial\Ccal_-$. Points $(\zeta, 0) = (\frac{1}{2}, 0)$ and $(0, 1 - \zeta) = (0, \frac{1}{2})$ are the two vertices. The geometry is 180\degree rotationally symmetric around the point $(\frac{1}{4}, \frac{1}{4})$.

Notice that as we increase the separability of the two classes via $a$, all the points in $[0, \zeta] \times [0, 1-\zeta]$ becomes feasible. In other words, if the data is completely separable, then the corners on the top-right and the bottom left are achievable. If the data is `inseparable', then the feasible set contains only the diagonal line joining $(0,\frac{1}{2})$ and $(\frac{1}{2},0)$, which passes through $(\frac{1}{4},\frac{1}{4})$.   
\eenumerate

\begin{figure*}[t]
	\centering 
	\subfigure[a = 0.5]{
		{\includegraphics[width=5cm]{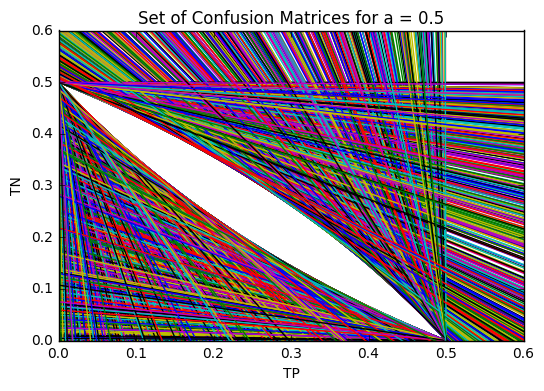}}
		\label{fig:cf_a_0_5}
	}
	\subfigure[a = 1]{
		{\includegraphics[width=5cm]{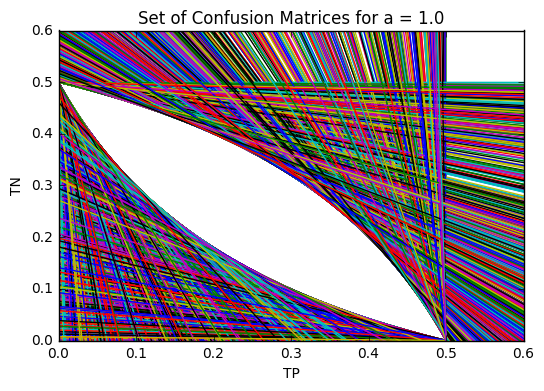}}
		\label{fig:cf_a_1}
	}
	\subfigure[a = 2]{
		{\includegraphics[width=5cm]{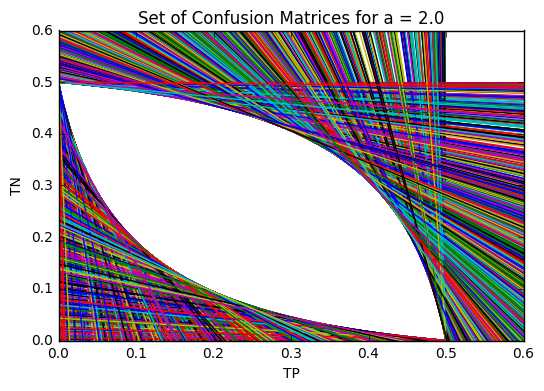}}
		\label{fig:cf_a_3}
	}
	\subfigure[a = 5]{
		{\includegraphics[width=5cm]{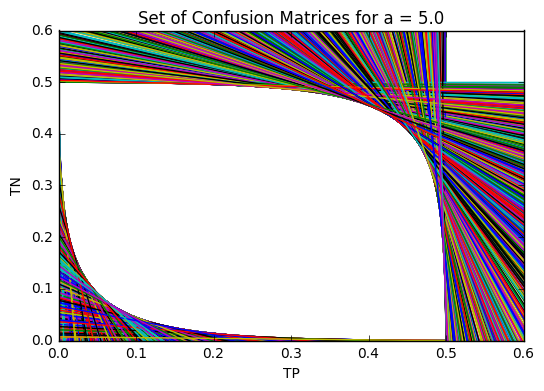}}
		\label{fig:cf_a_5}
	}
	\subfigure[a = 10]{
		{\includegraphics[width=5cm]{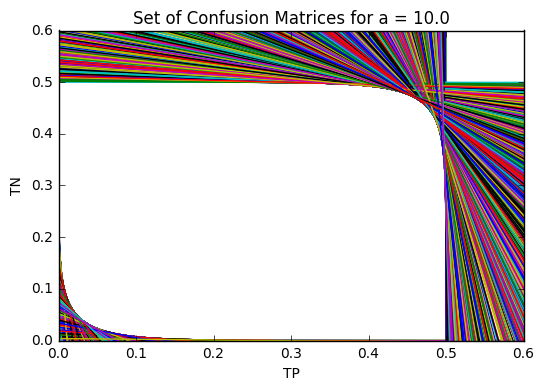}}
		\label{fig:cf_a_10}
	}
	\subfigure[a = 50]{
		{\includegraphics[width=5cm]{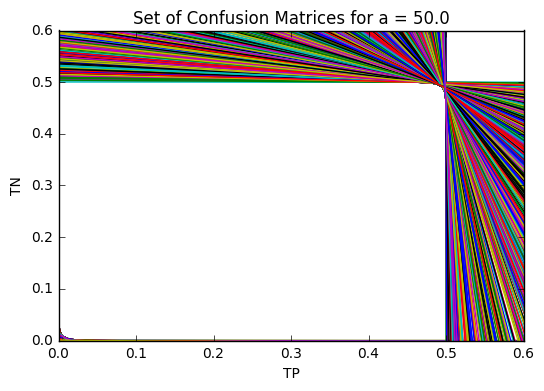}}
		\label{fig:cf_a_50}
	}
	\caption{Supporting hyperplanes and associated set of feasible confusion matrices for exponential model~\eqref{prob-dist} with $a = 0.5, 1, 2, 5, 10$ and $50$. The middle white region is $\Ccal$: the intersection of half-spaces associated with its supporting hyperplanes.}
	\label{fig:fs_cm}
	\end{figure*}


\pb

\section{Proofs}\label{appendix:proofs}
\newcommand{\tp}{TP}
\newcommand{\tn}{TN}
\newcommand{\df}{\,\mathrm df_X}
\newcommand{\dfz}{\,\mathrm df_Z}
\begin{lemma}\label{lem:properties-C}
The feasible set of confusion matrices $\mathcal C$ has the following properties:
	\renewcommand{\theenumi}{(\roman{enumi})}
	\begin{enumerate}[leftmargin=1cm]
	\item For all $(\tp,\tn)\in \mathcal C$, $0\leq \tp\leq \zeta$, and $0\leq \tn\leq 1-\zeta$.
	\item $(\zeta,0)\in\mathcal C$ and $(0,1-\zeta)\in \mathcal C$.
	\item For all $(\tp,\tn) \in \mathcal C$, $(\zeta-\tp, 1-\zeta-\tn)\in \mathcal C$.
	\item $\mathcal C$ is convex.
	\item $\mathcal C$ has a supporting hyperplane associated to every normal vector. 
	\item Any supporting hyperplane with positive slope is tangent to $\mathcal C$
	at $(0,1-\zeta)$ or $(\zeta,0)$.
	\end{enumerate}
\end{lemma}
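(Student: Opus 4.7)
The plan is to verify the six properties in order, with (i)--(iv) following from the probabilistic definition of the confusion matrix and (v)--(vi) leveraging Proposition~\ref{pr:bayeslinear}. Properties (i) and (ii) are immediate: $\tp(h) = \P(Y=1, h=1) \leq \P(Y=1) = \zeta$ together with $\tp(h) \geq 0$ gives the bound on $\tp$, and the analogous computation yields the bound on $\tn$; while the constant classifiers $h \equiv 1$ and $h \equiv 0$ lie in $\mathcal{H}$ and realize $(\zeta, 0)$ and $(0, 1-\zeta)$ respectively.

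For (iii), interpret $h \in \mathcal{H}$ as a randomized classifier with $\P(h=1 \mid X=x) = h(x)$, so that $\tp(h) = \int \eta(x) h(x) \, df_X(x)$ and $\tn(h) = \int (1-\eta(x))(1-h(x)) \, df_X(x)$. The flipped classifier $1 - h \in \mathcal{H}$ then satisfies $\tp(1-h) = \zeta - \tp(h)$ and $\tn(1-h) = (1-\zeta) - \tn(h)$, which is exactly the claimed central symmetry. For (iv), since both $\tp$ and $\tn$ are linear functionals of $h$, the randomized mixture $\lambda h_1 + (1-\lambda) h_2 \in \mathcal{H}$ realizes the convex combination $\lambda C(h_1) + (1-\lambda) C(h_2)$, establishing convexity. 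For (v), given any normal direction $\mathbf{m} = (m_{11}, m_{00})$, Proposition~\ref{pr:bayeslinear} produces an explicit threshold classifier $\bar h_{\mathbf{m}} \in \mathcal{H}$ whose confusion matrix $\bar C_{\mathbf{m}}$ maximizes $\langle \mathbf{m}, \cdot \rangle$ over $\mathcal{C}$, so the hyperplane $\{C : \langle \mathbf{m}, C \rangle = \langle \mathbf{m}, \bar C_{\mathbf{m}} \rangle\}$ supports $\mathcal{C}$ with normal $\mathbf{m}$.

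The main work is in (vi). A hyperplane with positive slope in the $(\tp, \tn)$ plane has normal $\mathbf{m}$ whose coordinates have strictly opposite signs; without loss of generality take $m_{11} > 0 > m_{00}$ (the symmetric case yields the other vertex). I split on the sign of $m_{11} + m_{00}$. If $m_{11}+m_{00} > 0$, the first branch of Proposition~\ref{pr:bayeslinear} gives threshold $m_{00}/(m_{11}+m_{00}) < 0$, so the Bayes classifier is $\mathbf{1}[\eta(x) \geq \text{threshold}] \equiv 1$. If $m_{11}+m_{00} < 0$, the second branch gives threshold $m_{00}/(m_{11}+m_{00})$, which exceeds $1$ precisely when $m_{11} > 0$ (dividing the equivalent inequality $m_{00} < m_{11}+m_{00}$ by the negative denominator), so again the Bayes classifier collapses to $h \equiv 1$. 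The degenerate case $m_{11}+m_{00} = 0$ can be handled directly, since $m_{11}(\tp - \tn)$ is maximized at $(\zeta, 0)$ using $\tp \leq \zeta$ and $\tn \geq 0$ from (i). In every sub-case the tangent point is $(\zeta, 0)$, completing (vi). The only non-routine step is this sign bookkeeping in (vi); all other parts follow directly from the definitions or from Proposition~\ref{pr:bayeslinear}.
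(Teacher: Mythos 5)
Parts (i)--(iv) match the paper's proof step for step: the probability bound for (i), constant classifiers for (ii), the flipped classifier $1-h$ for (iii), and randomized mixtures for (iv). The issue is in (v) and (vi), where you invoke Proposition~\ref{pr:bayeslinear}. In the paper's logical ordering this is circular: the paper's proof of Proposition~\ref{pr:bayeslinear}, for the case where $m_{11}$ and $m_{00}$ have opposite signs, explicitly cites Lemma~\ref{lem:properties-C}(vi) to conclude that the maximum sits at $(\zeta,0)$ or $(0,1-\zeta)$. So the proposition is established \emph{after} this lemma, and your argument assumes the very conclusion it is meant to establish.

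The circularity is easy to remove, and the repair is lighter-weight than your threshold bookkeeping. Part (v) is a standard fact about nonempty bounded convex sets and needs nothing from the Bayes characterization; the paper simply cites (i), (ii), (iv). For (vi), argue directly from (i) and (ii): a supporting hyperplane of positive slope has normal $(m_{11},m_{00})$ with opposite signs; taking $m_{11}>0>m_{00}$, every $(\tp,\tn)\in\mathcal{C}$ satisfies
\[
m_{11}\,\tp + m_{00}\,\tn \;\leq\; m_{11}\,\zeta + m_{00}\cdot 0 \;=\; m_{11}\zeta,
\]
using $\tp\leq\zeta$ and $\tn\geq 0$ from (i), and equality holds at $(\zeta,0)\in\mathcal{C}$ by (ii); the symmetric sign choice gives $(0,1-\zeta)$. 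Your sub-case sign analysis of the threshold $m_{00}/(m_{11}+m_{00})$ is arithmetically correct, but it is just a longer route to the same conclusion, and it only becomes a legitimate route once Proposition~\ref{pr:bayeslinear} has been given an independent proof --- which the paper does not do.
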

\begin{proof} We prove the statements as follows: 

\begin{enumerate}[label=(\roman*)., leftmargin=1cm]
\item $0\leq \Pmbb[h=Y=1]\leq \Pmbb[Y=1]=\zeta$, and similarly, $0\leq \Pmbb[h=Y=0]\leq \Pmbb[Y=0]=1-\zeta$.

\item If $h$ is the trivial classifier which always predicts $1$, then  $\tp(h)=\Pr[h = Y=1] = \Pr[Y=1]=\zeta$, and $\tn(h)=0$. This means that $(\zeta, 0) \in \Ccal$. Similarly, if $h$ is the classifier which always predicts 0, then $\tp(h)=\Pr[h =Y=1] = 0$, and $\tn(h)=\Pr[h = Y=0] = \Pr[Y=0]= 1 - \zeta$. Therefore, $(0, 1 - \zeta) \in \Ccal$. 

\item Let $h$ be a classifier such that $\tp(h)=\tp$, $\tn(h)=\tn$. Now, consider the classifier $1 - h$ (which predicts exactly the opposite of $h$). We have that
\begin{ceqn}
\begin{align}
	\tp(1-h)&=\Pmbb[(1-h)=Y=1] \nonumber \\ &=\Pmbb[Y=1]-\Pmbb[h=Y=1] \nonumber \\
	&=\zeta-\tp(h). \nonumber
\end{align}
\end{ceqn}
A similar argument gives
$$\tn(1-h)=1-\zeta-\tn(h).$$

\item Consider any two confusion matrices $(\tp_1,\tn_1),\,(\tp_2,\tn_2)\in \mathcal C$, attained by the classifiers $h_1, h_2 \in \Hcal$, respectively. Let $0\leq \lambda\leq 1$. Define a classifier $h'$ which predicts the output from the classifier $h_1$ with probability $\lambda$ and predicts the output of the classifier $h_2$ with probability $1 - \lambda$. Then,
\begin{ceqn}
	\begin{align*}
		\tp(h')&=\Pmbb[h'=Y=1] \nonumber \\
		&=\Pmbb[h_1=Y=1|h=h_1]\Pmbb[h=h_1]  \nonumber \\
		&+ \Pmbb[h_2=Y=1|h=h_2]\Pmbb[h=h_2]\\
		&=\lambda\tp(h_1)+(1-\lambda)\tp(h_2).
	\end{align*}
\end{ceqn}
	A similar argument gives the convex combination for $\tn$. Thus, $\lambda(\tp(h_1),\tn(h_1)) +(1-\lambda)(\tp(h_2), \tn(h_2)) \in \Ccal$ and hence, $\Ccal$ is convex.
\item This follows from convexity (iv) and boundedness (i). 

\item For any bounded, convex region in $[0,\zeta]\times [0,1-\zeta]$ which contains the points $(0,\zeta)$ and $(0,1-\zeta)$, it is true that any positively sloped supporting hyperplane will be tangent to $(0,\zeta)$ or  $(0,1-\zeta)$.
\end{enumerate}
\end{proof}
\pb
\begin{lemma}\label{lem:max-at-bdy}	
	The boundary of $\mathcal C$ is exactly the confusion matrices of 
	estimators of the form $\lambda \1[\eta(x)\geq t] + (1-\lambda)\1[\eta(x)>t]$ and 
    $\lambda \1[\eta(x)< t] + (1-\lambda)\1[\eta(x)\leq t]$ for some $\lambda, t \in [0, 1]$.
\end{lemma}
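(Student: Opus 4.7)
The plan is to prove the lemma by showing both set inclusions. The key bridge is the characterization of boundary points from Lemma~\ref{lem:properties-C}(v) together with convexity: a point $(\tp_0,\tn_0)\in\mathcal C$ lies on $\partial\mathcal C$ if and only if there exist coefficients $(m_{11},m_{00})\neq(0,0)$ such that $(\tp_0,\tn_0)$ maximizes the linear objective $m_{11}\tp+m_{00}\tn$ over $\mathcal C$. This reduces the task to characterizing, for every $(m_{11},m_{00})$, which (randomized) classifiers $h$ attain $\max_{h'\in\mathcal H} m_{11}\tp(h')+m_{00}\tn(h')$.

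Next, I would rewrite the objective using the tower rule,
\begin{equation*}
m_{11}\tp(h) + m_{00}\tn(h) = \Embb\!\left[h(X)\bigl((m_{11}+m_{00})\eta(X) - m_{00}\bigr)\right] + m_{00}(1-\zeta),
\end{equation*}
so that maximization over randomized $h\in[0,1]^{\mathcal X}$ decouples pointwise. A maximizer must set $h(x)=1$ where the bracketed integrand is positive, $h(x)=0$ where it is negative, and may take any value in $[0,1]$ where it vanishes. When $m_{11}+m_{00}>0$, setting $t=m_{00}/(m_{11}+m_{00})$ (clipped to $[0,1]$ in degenerate sign cases, which yield the vertices) turns this into $h(x)=\1[\eta(x)>t]+\lambda(x)\1[\eta(x)=t]$ for some measurable $\lambda(x)\in[0,1]$. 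Since $\tp(h)$ and $\tn(h)$ depend on $\lambda(\cdot)$ restricted to $\{\eta=t\}$ only through its $f_X$-average on that set, I may replace $\lambda(x)$ by a single constant $\lambda\in[0,1]$ without changing $(\tp,\tn)$; rewriting $\1[\eta(x)>t]+\lambda\1[\eta(x)=t] = \lambda\1[\eta(x)\geq t]+(1-\lambda)\1[\eta(x)>t]$ recovers exactly the first parametric family. The case $m_{11}+m_{00}<0$ flips the sign of the integrand and yields the second family involving $\1[\eta<t]$ and $\1[\eta\leq t]$, while $m_{11}+m_{00}=0$ collapses to the two vertices $(\zeta,0)$ and $(0,1-\zeta)$, which are already recovered by the endpoint choices $t\in\{0,1\}$ within either family.

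For the reverse inclusion, the same algebra runs backwards: any classifier of the stated form is, by construction, a pointwise maximizer of the bracketed integrand for an appropriate choice of sign pattern on $(m_{11},m_{00})$ (roughly $(1-t,t)$ for the first family and $(-(1-t),-t)$ for the second). Its confusion matrix therefore attains a supporting hyperplane of $\mathcal C$ and must lie on $\partial\mathcal C$ by Lemma~\ref{lem:properties-C}(v).

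The main obstacle I anticipate is the bookkeeping around the set $\{\eta(X)=t\}$: justifying that allowing only a constant $\lambda\in[0,1]$, rather than an arbitrary measurable function $\lambda(\cdot)$, is enough to generate every boundary confusion matrix. This reduces to noting that $(\tp,\tn)$ is affine in $\int_{\{\eta=t\}}\lambda(x)\,\mathrm df_X$, so varying a scalar $\lambda\in[0,1]$ already spans the full range of achievable pairs. A secondary care point is unifying the sign cases of $m_{11}+m_{00}$ and the endpoint thresholds $t\in\{0,1\}$ cleanly, so that precisely the two families stated, and no stray extra cases, emerge from the argument.
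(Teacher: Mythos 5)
Your proof is correct but takes a genuinely different route from the paper's. The paper uses a slicing argument: fix the prediction budget $c := \Pmbb[h=1]$, note that this forces $TP - TN = c + \zeta - 1$ to be constant, so each $c$ slices $\mathcal C$ with a positively sloped line, and the (at most) two boundary points on that line are the maximizer and minimizer of $TP(h)$ under the budget constraint; a greedy rearrangement argument (put all the prediction budget on the largest values of $\eta$) then identifies those optimizers with the randomized threshold classifiers, with $\lambda$ tuned to hit the budget $c$ exactly. You instead characterize $\partial\mathcal C$ directly as the set of maximizers of nontrivial linear functionals over the convex set $\mathcal C$, and then observe that each such linear program decouples pointwise in $h(x)$ via the tower-rule rewriting $m_{11}TP(h)+m_{00}TN(h)=\Embb\bigl[h(X)\bigl((m_{11}+m_{00})\eta(X)-m_{00}\bigr)\bigr]+m_{00}(1-\zeta)$ — a Neyman--Pearson / Bayes-optimality calculation. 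The two approaches buy different things: the paper's slicing avoids invoking the supporting hyperplane theorem and makes the parametrization by the scalar $c$ transparent; yours makes the connection to Proposition~\ref{pr:bayeslinear} and to the LPM parametrization in Section~\ref{ssec:parametrization} explicit, since the same decomposition underlies both, and it treats the vertex and degenerate-sign cases symmetrically. The bookkeeping concern you flag about the level set $\{\eta=t\}$ is the right one, and your resolution (that $(TP,TN)$ depends on $\lambda(\cdot)$ only through its $f_X$-average over $\{\eta=t\}$, so a scalar $\lambda$ suffices) is exactly the needed step; the paper handles the identical issue implicitly by choosing $\lambda$ so that $\Pmbb[h_{t+}^\lambda=1]=c$, which is the same affine interpolation in different clothing.
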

\begin{proof}
To prove that the boundary is attained by estimators of these forms, consider solving the problem under the constraint $\Pmbb[h=1]=c$. 
We have $\Pmbb[h=1]=TP+FP$, and $\zeta=\Pmbb[Y=1]=TP+FN$, so we get
$$
	TP-TN\  =\  c + \zeta - TP - TN - FP - FN\  =\  c+\zeta - 1, 
$$
which is a constant. Note that no confusion matrix has two values of $TP- TN$. This effectively partitions $\Ccal$, 
since all confusion matrices are attained by varying $c$ from 0 to 1. 
Furthermore, since $A:= TN = TP - c - \zeta + 1$ is an affine space (a line in tp-tn coordinate system), $\mathcal C \cap A$ has at least one endpoint, because $A$ would pass through the box $[\zeta, 0] \times [0, 1- \zeta]$ and has at most two endpoints due to convexity and boundedness of $\Ccal$. 
Since $A$ is a line with positive slope, $\mathcal C\cap A$ is a single point only when $A$ is tangent to $\mathcal C$ at $(0,1-\zeta)$ or $(\zeta,0)$, from Lemma~\ref{lem:properties-C}, part (vi).

Since the affine space $A$ has positive slope, we claim that the two endpoints are attained by maximizing or minimizing $TP(h)$ subject to $\Pr[h=1]=c$.
It remains to show that this happens for estimators of the
form $h_{t+}^\lambda := {\lambda \1[\eta(x)\geq t]} + {(1-\lambda)\1[\eta(x)>t]}$ and 
$h_{t-}^\lambda:=\lambda \1[\eta(x)< t] + (1 - \lambda)\1[\eta(x)\leq t]$, respectively.

Let $h$ be any estimator, and recall
$$
	TP(h):=\int_{\mathcal X} \eta(x)\Pmbb[h=1|X=x]\df. 
$$
It should be clear that under a constraint $\Pmbb[h=1]=c$, the optimal choice of $h$ puts all the weight onto the larger values of $\eta$. One can begin by classifying those $X$ into the positive class where $n(X)$ is maximum, until one exhausts the budget of $c$. Let $t$ be such that $\Pmbb[h_{t+}^0=1]\leq c\leq \Pmbb[h_{t+}^1=1]$, and let $\lambda \in [0, 1]$ be chosen such that $\Pmbb[h_{t+}^\lambda=1]=c$,
then $h_{t+}^\lambda$ must maximize $TP(h)$ subject to $\Pmbb[h=1]=c$.

A similar argument shows that all TP-minimizing boundary points are attained by the $h_{t-}$'s.
\end{proof}

\bremark
Under Assumption \ref{as:eta}, $\1[\eta(x)>t] = \1[\eta(x)\geq t]$ and $\1[\eta(x)<t] = \1[\eta(x)\leq t]$. Thus, the boundary of $\mathcal C$ is the confusion matrices of estimators of the form $\1[\eta(x)\geq t]$  and $\1[\eta(x)\leq t]$ for some $t \in [0, 1]$.
\eremark
\pb
\begin{proof}[Proof of Proposition~\ref{pr:bayeslinear}]
	\textit{``Let $\phi \in \varphi_{LPM}$, then 
$$
\hbar(x) = \left\{\begin{array}{lr}
			 \1[\eta(x) \geq \frac{m_{00}}{m_{11} + m_{00}}],& \; m_{11} + m_{00} \geq 0 \\
			 \1[\frac{m_{00}}{m_{11} + m_{00}} \geq \eta(x) ],& \;  o.w. 
	 	 \end{array}\right\}
$$
is a Bayes optimal classifier \emph{w.r.t} 
$\phi$. Further, the inverse Bayes classifier is given by $\barbelow{h}= 1 - \hbar$.''}
    
Note, we are maximizing a linear function on a convex set. There are 6 cases to consider:
    \begin{enumerate}[leftmargin=0.5cm]
    \item If the signs of $m_{11}$ and $m_{00}$ differ, the maximum is attained either at $(0,1-\zeta)$ 
    or $(\zeta,0)$, as per Lemma~\ref{lem:properties-C}, part (vi). 
    Which of the two is optimum depends on whether $|m_{11}|\geq |m_{00}|$, i.e. on the sign of
    $m_{11}+m_{00}$. It should be easy to check that in all 4 possible cases, the statement holds,
    noting that in all 4 cases, $0 \leq m_{00}/(m_{11}+m_{00}) \leq 1.$
    \item If $m_{11},m_{00}\geq 0$, then the maximum is attained on $\partial\mathcal{C}_+$, and the
    proof below gives the desired result.
    
	We know, from Lemma~\ref{lem:max-at-bdy}, that $\hbar$ must be of the form
	$\1[\eta(x)\geq t]$ for some $t$. It suffices to find $t$. 
	Thus, we wish to maximize $m_{11}TP(h_t)+m_{00}TN(h_t)$.
	Now, let $Z:=\eta(X)$ be the random variable obtained by evaluating $\eta$ at random $X$. Under Assumption~\ref{as:eta}, $df_X = df_Z$ and we have that 
	$$
		TP(h_t)\ = \int_{x:\eta(x)\geq t} \eta(x) \df\ =
		\int_{t}^1 z \dfz 
	$$
	Similarly, $\TN(h_t) = \int_0^t (1-z)\dfz$. Therefore, 
	\begin{ceqn}
	\begin{align}
		\tfrac{\partial}{\partial t} \big(m_{11}&TP(h_t)+m_{00}
		TN(h_t)\big)\nonumber \\  
		&= -m_{11}tf_Z(t) + \cdot m_{00}(1-t)f_Z(t). \nonumber
	\end{align}
    \end{ceqn}
	So, the critical point is attained at $t=m_{00}/(m_{11}+m_{00})$, as desired.
	A similar argument gives the converse result for $m_{11} + m_{00}< 0$.
	\item if $m_{11},m_{00}<0$, then the maximum is attained on $\partial\mathcal{C}_-$, and an
    argument identical to the proof above gives the desired result. 
	\eenumerate
\end{proof}
\pb
\begin{proof}[Proof of Proposition~\ref{pr:strict-convex}]
	\textit{``The set of confusion matrices $\Ccal$ is convex, closed, contained in the rectangle $[0,\zeta]\times [0,1-\zeta]$ (bounded), and $180\degree$ rotationally symmetric around the center-point $(\frac{\zeta}{2}, \frac{1-\zeta}{2})$. Under Assumption~\ref{as:eta}, $(0,1-\zeta)$ and $(\zeta,0)$ are the only vertices of $\Ccal$, and $\Ccal$ is strictly convex. Thus, any supporting hyperplane of $\Ccal$ is tangent at only one point.''}
    
    
    That $\Ccal$ is convex and bounded is already proven in Lemma~\ref{lem:properties-C}.
    To see that $\mathcal C$ is closed, note that, from Lemma~\ref{lem:max-at-bdy},
    every boundary point is attained. From Lemma~\ref{lem:properties-C}, part (iii), it follows that $\Ccal$ is $180\degree$ rotationally symmetric around the point $(\frac{\zeta}{2}, \frac{1-\zeta}{2})$. 
    
    Further, recall every boundary point of $\mathcal C$ can be attained by a thresholding estimator. By the discussion in Section~\ref{sec:confusion}, every boundary point is the optimal classifier for some linear performance metric, and the vector defining this linear metric is exactly the normal vector of the supporting hyperplane at the boundary point.
    
    A vertex exists if (and only if) some point is supported by more than one tangent hyperplane in two dimensional space. This means it is optimal for more than one linear metric. Clearly, all the hyperplanes corresponding to the slope of the metrics where $m_{11}$ and $m_{00}$ are of opposite sign (i.e. hyperplanes with positive slope) support either $(\zeta, 0)$ or $(0, 1-\zeta)$. So, there are at least two supporting hyperplanes at these points, which make them the vertices. Now, it remains to show that there are no other vertices for the set $\Ccal$. 
    
    Now consider the case when the slopes of the hyperplanes are negative, i.e. $m_{11}$ and $m_{00}$ have the same sign for the corresponding linear metrics. We know from Proposition~\ref{pr:bayeslinear} that optimal classifiers for linear metrics are threshold classifiers. Therefore there exist more than one threshold classifier of the form $h_t = \1[\eta(x)\geq t]$ with the same confusion matrix. Let's call them $h_{t_1}$ and $h_{t_2}$ for the two thresholds $t_1, t_2 \in [0, 1]$. This means that $\int_{x: \eta(x) \geq t_1} \eta(x)df_X = \int_{x: \eta(x) \geq t_2} \eta(x)df_X$. Hence, there are multiple values of $\eta$ which are never attained! This contradicts that $g$ is strictly decreasing. Therefore,  there are no vertices other than $(\zeta, 0)$ or $(0, 1-\zeta)$ in $\Ccal$. 
    
    Now, we show that no supporting hyperplane is tangent at multiple points (i.e., there no flat regions on the boundary). If suppose there is a hyperplane which supports two points on the boundary. Then there exist two threshold classifiers with arbitrarily close threshold values, but confusion matrices that are well-separated. Therefore, there must exist some value of $\eta$ which exists with non-zero probability, contradicting the continuity of $g$. 
    By the discussion above, we conclude that under Assumption~\ref{as:eta}, every supporting hyperplane to the convext set $\Ccal$ is tangent to only one point. This makes the set $\Ccal$ strictly convex.
\end{proof}
\pb
\begin{proof}[Proof of Lemma~\ref{lem:quasiconcave}] \textit{``Let 
$\rho^+:[0,1]\to \partial\mathcal C_+$, $\rho^-:[0,1]\to \partial\mathcal C_-$
be continuous, bijective, parametrizations of the upper and lower boundary,
respectively. Let $\phi:\mathcal C\to \mathbb R$ be a quasiconcave function,
and $\psi:\mathcal C\to \mathbb R$ be a quasiconvex function, 
which are 
monotone increasing in both $TP$ and $TN$.
Then the composition $\phi\circ \rho^+: [0,1]\to\mathbb R$ is
quasiconcave (and therefore unimodal) on the interval $[0, 1]$, and $\psi\circ\rho^-:[0,1]\to\mathbb R$ is quasiconvex (and therefore unimodal) on the interval $[0,1]$.'' }

We will prove the result for $\phi\circ \rho^+$ on $\partial\mathcal C^+$, and the argument for $\psi\circ \rho^-$ on $\partial\mathcal C^+$ is essentially the same. For simplicity, we drop
the $+$ symbols in the notation. Recall that a function is quasiconcave if and only if its superlevel sets are convex. 

It is given that $\phi$ is quasiconcave. Let $S$ be some superlevel set of $\phi$. We first want to show that for any $r<s<t$, if $\rho(r)\in S$ and $\rho(t)\in S$,
then $\rho(s)\in S$. Since $\rho$ is a continuous bijection, due to the geometry of $\Ccal$ (Lemma~\ref{lem:properties-C} and Proposition~\ref{pr:strict-convex}), we must have --- without loss of generality ---
$TP(\rho(r))< TP(\rho(s)) < TP(\rho(t))$, and $TN(\rho(r))>TN(\rho(s))>TN(\rho(t))$.
(otherwise swap $r$ and $t$). Since the set $\Ccal$ is strictly convex and the image of 
$\rho$ is $\partial \mathcal C$, then $\rho (s)$ must dominate (component-wise) a point in the convex combination
of $\rho(r)$ and $\rho(t)$. Say that point is $z$. Since $\phi$ is monotone increasing, then $x\in S\implies y \in S$ for all $y\geq x$ componentwise. Thereofore, $\phi(\rho(s)) \geq \phi(z)$. Since, $S$ is convex, $z \in S$ and, due to the argument above, $\rho(s) \in S$.

This implies that $\rho^{-1}(\partial \mathcal C\cap S)$ is an interval, and is therefore convex. Thus, the superlevel sets of $\phi\circ \rho$ are convex, so it is quasiconcave,
as desired. This implies unimodaltiy as a function over the real line which has more than one local maximum can not be quasiconcave (consider the super-level set for some value slightly less than the lowest of the two peaks).
\end{proof}
\pb
\begin{proof}[Proof of Proposition~\ref{prop:sufficient}] \textit{``Sufficient conditions for $\phi \in \varphi_{LFPM}$ to be bounded in $[0, 1]$ and simultaneously monotonically increasing in TP and TN are: $p_{11}, p_{00} \geq 0$, $p_{11} \geq q_{11}$, $p_{00} \geq q_{00}$, $q_0 = (p_{11} - q_{11})\zeta + (p_{00} - q_{00})(1 - \zeta) + p_0$, $p_0 = 0$, and $p_{11} + p_{00} = 1$ (Conditions in Assumpotion~\ref{assump:sufficient}). WLOG, we can take both the numerator and denominator to be positive.'' }

For this proof, we denote $TP$ and $TN$ as $C_{11}$ and $C_{00}$, respectively. Let us take a linear-fractional metric
\begin{ceqn}
\begin{align}
\phi(C) = \frac{p_{11}C_{11}+p_{00}C_{00}+p_0}{q_{11}C_{11}+q_{00}C_{00}+q_0}
\label{eq:linear-f}
\end{align}
\end{ceqn}
where $p_{11}, q_{11},p_{00},q_{00}$ are not zero simultaneously. 
We want $\phi(C)$ to be monotonic in TP, TN and bounded. If for any $C \in \Ccal$, $\phi(C) < 0$, we can add a large positive constant such that $\phi(C) \geq 0$, and still the metric would remain linear fractional. So, it is sufficient to assume $\phi(C) \geq 0$. Furthermore, boundedness of $\phi$ implies $\phi(C) \in [0,D]$, for some $ \Rmbb \ni D \geq 0$. Therefore, we may divide $\phi(C)$ by $D$ so that $\phi(C) \in [0,1]$ for all $C \in \Ccal$. Still, the metric is linear fractional and $\phi(C) \in [0,1]$.

Taking derivative of $\phi(C)$ w.r.t. $C_{11}$.
\begin{ceqn}
\begin{align*}
\frac{\partial \phi(C)}{\partial C_{11}} &= \frac{p_{11}}{q_{11}C_{11}+q_{00}C_{00}+q_0} \nonumber \\
&-\frac{q_{11}(p_{11}C_{11}+p_{00}C_{00}+p_0)}{(q_{11}C_{11}+q_{00}C_{00}+q_0)^2} \geq 0 
\end{align*}
\end{ceqn}

\begin{ceqn}
\begin{align*}
\Rightarrow p_{11}(q_{11}C_{11}+q_{00}C_{00}+q_0) \geq q_{11}(p_{11}C_{11}+p_{00}C_{00}+p_0)
\end{align*}
\end{ceqn}
If denominator is positive then the numerator is positive as well.
\begin{itemize}
\item Case 1: The denominator $q_{11}C_{11}+q_{00}C_{00}+q_0 \geq 0$.
\begin{itemize}
\item Case (a) $q_{11} > 0$. 
\begin{ceqn}
\begin{align*}
\Rightarrow p_{11} &\geq q_{11} \phi(C) \\
\Rightarrow p_{11} &\geq q_{11}\sup_{C\in \Ccal} \phi(C)\\
\Rightarrow p_{11} &\geq q_{11}\btau \qquad \text{ (Necessary Condition)}
\end{align*}
\end{ceqn}
We are considering sufficient condition, which means $\btau$ can vary from $[0, 1]$. Hence, a sufficient condition for monotonicity in $C_{11}$ is $p_{11} \geq q_{11}$. Furthermore,
$p_{11} \geq 0$ as well.
\item Case (b) $q_{11} < 0$.
\begin{ceqn}
\begin{align*}
\Rightarrow p_{11} &\geq {q_{11}} \btau
\end{align*}
\end{ceqn}
Since $q_{11} <0$ and $\btau \in [0,1]$, sufficient condition is $p_{11} \geq 0$. So, in this case as well we have that
\begin{ceqn}
\begin{align*}
p_{11} \geq q_{11}, ~p_{11} \geq 0.
\end{align*}
\end{ceqn}
\item Case(c) $q_{11} = 0$.
\begin{ceqn}
\begin{align*} 
\Rightarrow p_{11} &\geq 0
\end{align*}
\end{ceqn}
We again have $p_{11}\geq q_{11}$ and $p_{11} \geq 0$ as sufficient conditions. 

A similar case holds for $C_{00}$, implying $p_{00} \geq q_{00}$ and $p_{00} \geq 0$.
\end{itemize}
\item Case 2: The denominator $q_{11}C_{11}+q_{00}C_{00} + q_0$ is negative. 
\begin{ceqn}
\begin{align*}
p_{11} &\leq q_{11} \Big(\frac{p_{11}C_{11}+p_{00}C_{00}+p_0}{q_{11}C_{11}+q_{00}C_{00}+q_0}\Big)\\
\Rightarrow p_{11} &\leq q_{11} \btau
\end{align*}
\end{ceqn}
\begin{itemize}
\item Case(a) If $q_{11} > 0$. So, we have $p_{11} \leq q_{11}$ and $p_{11} \leq 0$  as sufficient condition.
\item Case(b) If $q_{11} < 0$, $\Rightarrow p_{11} \leq q_{11}$. So, we have $q_{11} < 0$, $\Rightarrow p_{11} <0$ as sufficient condition.
\item Case(c) If $q_{11} = 0$, $\Rightarrow p_{11} \leq 0$ and $p_{11} \leq q_{11}$ as sufficient condition.

So in all the cases we have that 
\begin{ceqn}
\begin{align*}
p_{11} \leq q_{11} &\text{ and } p_{11} \leq 0\\
\end{align*}
\end{ceqn}
as the sufficient conditions. A similar case holds for $C_{00}$ resulting in $p_{00} \leq q_{00}$ and $p_{00} \leq 0$. 
\end{itemize}
\end{itemize}

Suppose the points where denominator is positive is $\Ccal^{+}\subseteq \Ccal$. Suppose the points where denominator is negative is $\Ccal^{-} \subseteq \Ccal$. For gradient to be non-negative at points belonging to $\Ccal^{+}$, the sufficient condition is 
\begin{ceqn}
\begin{align*}
p_{11} \geq q_{11} &\text{ and } p_{11} \geq 0\\
p_{00} \geq q_{00} &\text{ and } p_{00} \geq 0
\end{align*}
\end{ceqn}
For gradient to be non-negative at points belonging to $\Ccal^{-}$, the sufficient condition is
\begin{ceqn}
\begin{align*}
p_{11} \leq q_{11} &\text{ and } p_{11} \leq 0\\
p_{00} \leq q_{00} &\text{ and } p_{00} \leq 0
\end{align*}
\end{ceqn}
If $\Ccal_{+}$ and $\Ccal_{-}$ are not empty sets, then the gradient is non-negative only when $p_{11}, p_{00} = 0$ and $q_{11}, q_{00} = 0$. This is not possible by the definition described in \eqref{eq:linear-f}. Hence, one of $\Ccal_{+}$ or $\Ccal_{-}$ should be empty. WLOG, we assume $\Ccal_{-}$ is empty and conclude that $\Ccal_{+} = \Ccal$. \\
An immediate consequence of this is, WLOG, we can take both the numerator and the denominator to be positive, and the sufficient conditions for monotonicity are as follows:
\begin{ceqn}
\begin{align*}
p_{11} \geq q_{11} \text{ and } p_{11} \geq 0\nonumber\\
p_{00} \geq q_{00} \text{ and } p_{00} \geq 0
\end{align*}
\end{ceqn}

Now, let us take a point in the feasible space $(\zeta,0)$. We know that
\begin{ceqn}
\begin{align}
\phi((\zeta,0)) &= \frac{p_{11}\zeta+p_0}{q_{11}\zeta + q_0} \leq \btau \nonumber \\ 
&\Rightarrow p_{11}\zeta + p_0 \leq \btau (q_{11}\zeta + q_0) \nonumber\\
&\Rightarrow (p_{11} - \btau q_{11})\zeta + (p_0 - \btau q_0) \leq 0\nonumber \\
&\Rightarrow (p_0 - \btau q_0) \leq -\underbrace{(p_{11}-\btau q_{11})}_{\text{positive}}\underbrace{\zeta}_{\text{positive}} \nonumber \\
&\Rightarrow (p_0 - \btau q_0) \leq 0.
\label{eq:p0zero}
\end{align}
\end{ceqn}
Metric being bounded in $[0,1]$ gives us 
\begin{ceqn}
\begin{align*}
\frac{p_{11}C_{11}+p_{00}C_{00}+p_0}{q_{11}C_{11}+q_{00}C_{00}+q_0} & \leq 1 \\
\Rightarrow p_{11}C_{11} + p_{00}C_{00} + p_0 &\leq q_{11}C_{11} + q_{00}C_{00} + q_0 
\end{align*}
\end{ceqn}
$$
\Rightarrow q_0 \geq (p_{11}-q_{11})c_{11} + (p_{00}-q_{00})c_{00} + p_0 \qquad \forall C \in \Ccal.
$$
Hence, a sufficient condition is 
$$q_0 = (p_{11}-q_{11})\zeta + (p_{00}-q_{00})(1-\zeta) + p_0.$$
Equation \eqref{eq:p0zero}, which we derived from monotonicity, implies that
\begin{itemize}[leftmargin=0.5cm]
\item Case (a) $q_0 \geq 0$, $\Rightarrow p_0 \leq 0$ as a sufficient condition. 
\item Case (b) $q_0 \leq 0$, $\Rightarrow p_0 \leq q_0 \leq 0$ as a sufficient condition. 
\end{itemize}
Since the numerator is positive for all $C \in \Ccal$ and $p_{11}, p_{00} \geq 0$, a sufficient condition for $p_0$ is $p_0 = 0$.

Finally, a monotonic, bounded in $[0,1]$, linear fractional metric is defined by
\begin{ceqn}
\begin{align*}
\phi(C) &= \frac{p_{11}c_{11}+p_{00}c_{00}+p_0}{q_{11}c_{11}+q_{00}c_{00}+q_0},
\end{align*}
\end{ceqn}
where $p_{11} \geq q_{11}, p_{11} \geq 0,
p_{00} \geq q_{00}, p_{00} \geq 0,
q_0 = (p_{11}-q_{11})\zeta + (p_{00}-q_{00})(1-\zeta) + p_0,
p_0 = 0$, and $p_{11}, q_{11}, p_{00}$, and $q_{00}$ are not simulataneously zero. Further, we can divide the numerator and denominator with $p_{11} + p_{00}$ without changing the metric $\phi$ and the above sufficient conditions. Therefore, for elicitation purposes, we can take $p_{11} + p_{00} = 1$.
\end{proof}
\begin{proof}[Proof of Proposition~\ref{pr:solvesystem}] \textit{``Under Assumption~\ref{assump:sufficient}, knowing $p_{11}'$ solves the system of equations~\eqref{eq:lin-fr-equi} as follows:
\begin{ceqn}
\begin{align}
	p_{00}' &= 1 - p_{11}', \, \nonumber q_0' = \oline{C}_0\frac{P'}{Q'}, \nonumber \\
	q_{11}' &= (p_{11}' - \bmone)\frac{P'}{Q'}, \, q_{00}' = (p_{00}' - \bmzero)\frac{P'}{Q'}, 
\end{align}
\end{ceqn}
where $P' = p_{11}'\zeta + p_{00}'(1 - \zeta)$ and $Q' = P' + \oline{C}_0 - \bmone\zeta -  \bmzero(1 - \zeta)$. Thus, it elicits the LFPM.'' }

For this proof as well, we use $TP = C_{11}$ and $TN = C_{00}$. Since the linear fractional matrix is monotonically increasing in $C_{11}$ and $C_{00}$, it is maximized at the upper boundary $\partial \Ccal_+$. Hence $m_{11} \geq 0$ and $m_{00} \geq 0$. So, after running Algorithm \ref{alg:linear}, we get a hyperplane such that 
\begin{ceqn}
\begin{align}
p_{11} - \tau q_{11} &= \alpha m_{11}, \quad
p_{00} - \tau q_{00} = \alpha m_{00}, \nonumber \\ 
p_0 -\tau q_0 &= -\alpha\underbrace{(m_{11}C_{11}^* + m_{00}C_{00}^*)}_{=: C_0}.
\label{eq:system}
\end{align}
\end{ceqn}
Since $p_{11} - \btau q_{11} \geq 0$ and $m_{11} \geq 0$, $\Rightarrow \alpha \geq 0$. As discussed in the main paper, we avoid the case when $\alpha = 0$. Therefore, we have that $\alpha > 0$.

Equation \eqref{eq:system} implies that 
\begin{ceqn}
\begin{align*}
\frac{p_{11}}{\alpha} - \frac{\tau q_{11}}{\alpha} &= m_{11}, \quad
\frac{p_{00}}{\alpha} - \frac{\tau q_{00}}{\alpha} = m_{00}, \nonumber \\
\frac{p_0}{\alpha} - \frac{\tau q_0}{\alpha} &= -C_0.
\end{align*}
\end{ceqn}
Assume $p_{11}' = \frac{p_{11}}{\alpha}, p_{00}' = \frac{p_{00}}{\alpha}$, $q_{11}' = \frac{q_{11}}{\alpha}$, $q_{00}' = \frac{q_{00}}{\alpha}$, $p_0' = \frac{p_0}{\alpha}$, $q_0' = \frac{q_0}{\alpha}$. Then, the above system of equations turns into
\begin{ceqn}
\begin{align*}
p_{11}' - \btau q_{11}' &= m_{11}, \quad
p_{00}' - \btau q_{00}' = m_{00}, \nonumber \\
p_0' - \btau q_0' &= -C_0.
\end{align*}
\end{ceqn}
A $\phi'$ metric defined by the $\ppone, \ppzero, \pqone, \pqzero, \pqnot$ is monotonic, bounded in $[0,1]$, and satisfies all the sufficient conditions of Assumptions~\ref{assump:sufficient}, i.e.,
\begin{ceqn}
\begin{align*}
p_{11}' \geq q_{11}' ~,~ p_{00}' \geq q_{11}',~
p_{11}' \geq 0 ~,~ p_{00}' \geq 0, \nonumber \\
q_0' = (p_{11}' - q_{11})\pi + (p_{00}' - q_{00}')\pi + p_0', ~
p_0' = 0.
\end{align*}
\end{ceqn}
As discussed in the main paper, solving the above system does not harm the elicitation task. For simplicity, replacing the `` $'$ " notation with the normal one, we have that
\begin{ceqn}
\begin{align*}
p_{11} - \btau q_{11} &= m_{11}, \quad
p_{00} - \btau q_{00} = m_{00}, \nonumber \\
p_0 - \btau q_0 &= -C_0
\end{align*}
\end{ceqn}
From last equation, we have that $\btau  = \frac{C_0 + p_0}{q_0}$. Putting it in the rest gives us
\begin{ceqn}
\begin{align*}
q_0 p_{11} - (C_0 + p_0)q_{11} = m_{11}q_0, \nonumber \\
q_0 p_{00} - (C_0 + p_0) q_{00} = m_{00} q_0.
\end{align*}
\end{ceqn}
We already have
\begin{ceqn}
\begin{align*}
q_0 &= (p_{11}-q_{11})\zeta + (p_{00}-q_{00})(1-\zeta)+p_0\\
\Rightarrow q_{11} &= \frac{p_{00}(1-\zeta)-q_{00}(1-\zeta) + p_{11}\zeta - q_0+p_0}{\zeta},
\end{align*}
\end{ceqn}
which further gives us
\begin{ceqn}
\begin{align*}
q_0 &= \frac{(C_0 + p_0)[p_{00}(1-\zeta)+p_{11}\zeta +  p_0]}{p_{11}\zeta + p_{00}(1-\zeta) + p_0 + C_0 - m_{11}\zeta - m_{00}(1-\zeta)},\\
q_{00} &= \frac{(p_{00}-m_{00})[p_{00}(1-\zeta)+p_{11}\zeta + p_0]}{p_{11}\zeta + p_{00}(1-\zeta) + p_0 + C_0 -m_{11}\zeta -m_{00}(1-\zeta)},\\
q_{11} &= \frac{(p_{11}-m_{11})[p_{00}(1-\zeta) + p_{11}\zeta + p_0]}{p_{11}\zeta + p_{00}(1-\zeta) + p_0 + C_0 - m_{11}\zeta -m_{00}(1-\zeta)}.
\end{align*}
\end{ceqn}
Define
\begin{ceqn}
\begin{align*}
P &:= p_{00}(1-\zeta) + p_{11}\zeta + p_0,\\
Q &:= P + C_0 - m_{11}\zeta -m_{00}(1-\zeta).
\end{align*}
\end{ceqn}
Hence, 
\begin{ceqn}
\begin{align*}
q_0 &= (C_0 + p_0)\frac{P}{Q}, \quad
q_{11} = (p_{11}-m_{11})\frac{P}{Q}, \nonumber \\
q_{00} &= (p_{00}-m_{00})\frac{P}{Q}.
\end{align*}
\end{ceqn}
Now using sufficient conditions, we have $p_0 = 0$. The final solution is the following:
\begin{ceqn}
\begin{align*}
q_0 &= C_0 \frac{P}{Q}, \quad
q_{11} = (p_{11} - m_{11})\frac{P}{Q}, \nonumber \\
q_{00} &= (p_{00} - m_{00})\frac{P}{Q},
\label{eq:systemsolve}
\end{align*}
\end{ceqn}
where $P:= p_{11}\zeta + p_{00}(1-\zeta) $ and $Q:= P + C_0 - m_{11}\zeta - m_{00}(1-\zeta)$. 
We have taken ${p}_{11} + {p}_{00} = 1$, but the original $p'_{11} + p'_{00} = \frac{1}{\alpha}$. Therefore, we learn $\hat{\phi}(C)$ such that such that $\hat{\phi}(C) = \alpha \phi(C)$.
\end{proof}
\pb
\bcorollary \label{cor:f-beta} 
For $F_\beta$-measure, where $\beta$ is unknown, Algorithm \ref{alg:linear} elicits the true performance metric up to a constant in $O(\log(\frac{1}{\epsilon}))$ queries to the oracle.
\ecorollary
\begin{proof}
Algorithm \ref{alg:linear} gives us the supporting hyperplane, the trade-off, and the Bayes confusion matrix. If we know $p_{11}$, then we can use Proposition \ref{pr:solvesystem} to compute the other coefficients. In $F_\beta$-measure, $p_{11}=1$, and we do not require Algorithms \hyperlink{alg:quasiconvex}{2} and \ref{alg:grid-search}.
\end{proof}
\pb
\begin{proof}[Proof of Theorem~\ref{thm:quasi}] \textit{``Given $\epsilon,\epsilon_\Omega \geq 0$ and a 1-Lipschitz metric $\phi$ that is monotonically increasing in TP, TN. If it is quasiconcave (quasiconvex) then Algorithm \ref{alg:linear} (Algorithm~\hyperlink{alg:quasiconvex}{2}) finds an approximate maximizer $\Cbar$ (minimizer $\barbelow{C}$). Furthemore, $(i)$ the algorithm returns the supporting hyperplane at that point, $(ii)$ the value of $\phi$ at that point is within $O(\sqrt{\epsilon_\Omega} +  \epsilon)$ of the optimum, and $(iii)$ the number of queries is $O(\log\frac1\epsilon)$.'' }

\begin{enumerate}[leftmargin=0.5cm, label=(\roman*)]
\item As a direct consequence of our representation of the points on the boundary via their supporting hyperplanes (Section~\ref{ssec:parametrization}), when we search for the maximizer (mimimizer), we also get the associated supporting hyperplane as well.

\item  By the nature of binary search, we are effectively narrowing our search interval around some target angle $\theta_0$. Furthermore, since the oracle queries are correct unless the $\phi$ values are within $\epsilon_\Omega$, we must have $|\phi(C_{\oline \theta})-\phi(C_{\theta_0})|<\epsilon_\Omega$, and we output $\theta'$ such that $|\theta_0-\theta'|<\epsilon$. Now, we want to check the bound $|\phi(C_{\theta'}) - \phi(C_{\oline{\theta}})|$. In order to do that, we will also consider the threshold corresponding to the supporting hyperplanes at $C_\theta$'s, i.e. $\delta_\theta = \sfrac {\sin\theta}{\sin\theta + \cos\theta}$. 

Notice that,
\begin{ceqn}
\begin{align}
|\phi(C_{\oline{\theta}}) - \phi(C_{\theta'})| &= |\phi(C_{\oline{\theta}}) -\phi(C_{\theta_0}) \nonumber \\
&\qquad+ \phi(C_{\theta_0}) - \phi(C_{\theta'})| \nonumber\\
&\leq |\phi(C_{\oline{\theta}}) -\phi(C_{\theta_0})|\nonumber \\
&\qquad + |\phi(C_{\theta_0}) - \phi(C_{\theta'})| 
\end{align}
\end{ceqn}
The first term is bounded by $\epsilon_{\Omega}$ due to the oracle assumption.  For the bounds the second term, consider the following.
$$
|TP(C_{\theta_0}) - TP(C_{\theta'})|
$$
\begin{ceqn}
\begin{align}
&= \left|\int\limits_{x:\frac{sin\theta_0}{sin\theta_0 + cos\theta_0}\geq\eta(x)\geq\frac{sin\theta'}{sin\theta' + cos\theta'}}\!\!\!\!\!\!\!\!\!\!\!\! \eta(x)\df\right| \nonumber \\
&\leq \left|\int\limits_{x:\frac{sin\theta_0}{sin\theta_0 + cos\theta_0} - \oline{\delta}\geq\eta(x) - \oline{\delta}\geq\frac{sin\theta'}{sin\theta' + cos\theta'}- \oline{\delta}}\!\!\!\!\!\!\!\!\!\!\!\! \df\right| \nonumber \\
&= \left|\int\limits_{x:\frac{sin\theta_0}{sin\theta_0 + cos\theta_0} - \frac{sin\oline{\theta}}{sin\oline{\theta} + cos\oline{\theta}} \geq\eta(x) - \oline{\delta}\geq\frac{sin\theta'}{sin\theta' + cos\theta'}- \frac{sin\oline{\theta}}{sin\oline{\theta} + cos\oline{\theta}}}\!\!\!\!\!\!\!\!\!\!\!\! \df\right| \nonumber \\
&= \left|\int\limits_{x:\frac{sin(\theta_0 - \oline{\theta})}{sin(\theta_0 + \oline{\theta})  + cos(\theta_0 - \oline{\theta})} \geq\eta(x) - \oline{\delta}\geq\frac{sin\theta'}{sin\theta' + cos\theta'}- \frac{sin\oline{\theta}}{sin\oline{\theta} + cos\oline{\theta}}}\!\!\!\!\!\!\!\!\!\!\!\! \df\right|,  
\label{eq:integrals}
\end{align}
\end{ceqn}

where the inequality in the second step follows from the fact that $\eta(x) \leq 1$. 

Recall that the left term in the integral limits is actually, $\delta_{\theta_0} - \delta_{\oline\theta}$. When $|\phi(C_{\delta_{\theta_0}})-\phi(C_{\delta_{\oline\theta}})|<\epsilon_\Omega$, then we have $|\oline\delta-\delta_0|<\frac 2{k_0}\sqrt{
k_1\epsilon_\Omega}$. The proof of this statement is given in the proof of Theorem~\ref{thm:linear} (proved later).
Since sin is 1-Lipschitz, adding and subtracting $\sin\theta_0/(\sin\theta_0 + \cos\theta_0)$ in the right term of the integration limit gives us the minimum value of the right term to be $-\epsilon-\frac{2\sqrt{k_1\epsilon_\Omega}}{k_0}$.  
This implies that the quantity in ~\eqref{eq:integrals} is less than

\begin{ceqn}
\begin{align}
&\Pmbb[\{(\eta(X) - \oline{\delta}) \leq \frac{2}{k_0}\sqrt{k_1\epsilon_\Omega}\} \cap \nonumber \\ &\qquad\{(\oline{\delta} - \eta(X)) \leq \epsilon + \frac{2}{k_0}\sqrt{k_1\epsilon_\Omega}\}]\nonumber \\
&\leq\Pmbb[(\oline{\delta} - \eta(X)) \leq \epsilon + \frac{2}{k_0}\sqrt{k_1\epsilon_\Omega}] \nonumber \\
&\leq \frac{2k_1}{k_0}\sqrt{k_1\epsilon_\Omega} + k_1\epsilon \quad \text{(by Assumption~\ref{as:low-weight-around-opt})}
\end{align}
\end{ceqn}

As $\Pmbb(A\cap B) \leq min\{\Pmbb(A), \Pmbb(B)\}$, the inequality used in the second step is rather loose, but it shows the dependency on sufficiently small $\epsilon$. It could be independent of the tolerance $\epsilon$ depending on the $\Pmbb(\eta(X) - \oline\delta)$ or the sheer big value of $\epsilon$. Nevertheless, a similar result applies to the true negative rate. 
Since $\phi$ is 1-Lipschitz, we have that $|\phi(C)-\phi(C')|\leq 1\cdot \Vert C-C'\Vert$,
but $$\Vert C(\theta_0)-C(\theta')\Vert_\infty 
\leq \frac{2k_1}{k_0}\sqrt{k_1\epsilon_\Omega} + k_1\epsilon. $$

Hence, $|\phi(C_{\theta'}) - \phi(C_{\oline{\theta}})| \leq \sqrt{2}(\frac{2k_1}{k_0}\sqrt{k_1\epsilon_\Omega} + k_1\epsilon) + \epsilon_\Omega.$  Since the metrics are in $[0, 1]$, $\epsilon_\Omega \in [0, 1]$. Therefore, $\sqrt{\epsilon_\Omega} 
\geq \epsilon_\Omega$. This gives us the desired result.

\item We needed only, for part (ii), that the interval of possible values of $\theta'$ be at most $\epsilon$ to the target angle $\theta_0$. Ideally, this is obtained by making $\log_2(1/\epsilon)$ queries, but due to the region where oracle misreport its preferences, we can be off to the target angle $\theta_0$ by more than $\epsilon$. However, binary search will again put us back in the correct direction, once we leave the misreporting region. And this time, even if we are off to the target angle $\theta_0$, we will be closer than before. Therefore, for the interval of possible values of $\theta'$ to be at most $\epsilon$, we require at least $\log(\frac{1}{\epsilon})$ rounds of the algorithm, each of which is a constant number of pairwise queries.

\end{enumerate}\vspace*{-2em}
\end{proof}

\pb
\begin{proof}[Proof of Lemma~\ref{lem:lower-bound}] \textit{``Under our model, no algorithm can find the maximizer (minimizer) in fewer than~$O(\log\frac1\epsilon)$ queries.'' }

For any fixed $\epsilon$, divide the search space $\theta$ into bins of length $\epsilon$, resulting in $\ceil[\big]{\frac{1}{\epsilon}}$ classifiers. When the function evaluated on these classifiers is unimodal, and when the only operation allowed is pairwise comparison, the optimal worst case complexity for finding the argument maximum (of function evaluations) is $O(\log\frac1\epsilon)$ \cite{cormen2009introduction}, which is achieved by binary search. 
\end{proof}
\pb
\bprop\label{pr:sample-concentration-confusion} 
    Let $(y_1,x_1,h(x_1)),\,\dotsc,\,(y_n,x_n,h(x_n))$ be $n$ i.i.d.~samples from the joint distribution on $Y$, $X$, and $h(X)$. Then by H\"offding's inequality, 
	$$\Pmbb\left[\left|\tfrac1n\textstyle\sum_{i=1}^n\1[h_i=y_i=1] - TP(h)\right|\geq \epsilon \right]\leq 2e^{-2n\epsilon^2}.$$
	The same holds for the analogous estimator on TN.
\eprop
\bproof
Direct application of H\"offding's inequality.
\eproof

\pb
\begin{proof}[Proof of Theorem~\ref{thm:linear}] \textit{``Let $\varphi_{LPM} \ni \sphi = \smmbf$ be the true performance metric. Under Assumption~\ref{as:low-weight-around-opt}, given $\epsilon > 0$, LPM elicitation (Section~\ref{ssec:elicit_linear}) outputs a 
performance metric $\hphi = \hmmbf$, such that $\norm{\smmbf - \hmmbf}_\infty < \sqrt{2}\epsilon + \frac 2{k_0}\sqrt{2k_1\epsilon_\Omega}$.''}

We will show this for threshold classifiers, as in the statement of the Assumption~\ref{as:low-weight-around-opt}, but it is not difficult to extend the argument to the case of querying angles. (Involves a good bit of trigonometric identities...)

Recall, the threshold estimator $h_\delta$ returns positive if $\eta(x)\geq \delta$, and zero otherwise. Let $\oline\delta$ be the threshold which maximizes performance with respect to $\phi$, and $C_{\oline\delta}$ be its confusion matrix. 
For simplicity, suppose that $\delta'<\oline\delta$. Recall, from Assumption~\ref{as:low-weight-around-opt} that $\Pr[\eta(X)\in [\oline\delta-\frac{k_0}{2k_1}\epsilon,\,\oline\delta]]\leq k_0\epsilon/2$,
but $\Pr[\eta(X)\in[\oline\delta-\epsilon,\oline\delta]]\geq k_0\epsilon$, and therefore 
\[
	\Pmbb\Big[\eta(X)\in[\oline\delta-\epsilon,\oline\delta-\tfrac{k_0}{2k_1}\epsilon]\Big]\geq k_0\epsilon/2
\]
Denoting $\phi(C)=\langle\mmbf,C\rangle$, and recalling that $\oline\delta = m_{00}/(m_{11}+m_{00})$, expanding the integral,
we get
$$
\phi(C_{\oline\delta})-\phi(C_{\delta'})
$$
\small
\begin{align*}
	&=
    \int_{x:\delta'\leq \eta(x)\leq \oline\delta}\!\!\!\!\!\!\!\!\!\!\!\! [m_{00}(1-\eta(x))-m_{11}\eta(x)]\df \\
    &=\int_{x:\oline\delta - (\oline\delta - \delta')\leq \eta(x)\leq \oline\delta}\!\!\!\!\!\!\!\!\!\!\!\! [m_{00}(1-\eta(x))-m_{11}\eta(x)]\df \\
    &\geq\int_{x:\oline\delta - (\oline\delta - \delta')\leq \eta(x)\leq \oline\delta - \frac{k_0}{2k_1}(\oline\delta - \delta')}\!\!\!\!\!\!\!\!\!\!\!\! [m_{00}(1-\eta(x))-m_{11}\eta(x)]\df \\
\end{align*}
\begin{align*}
    &\geq[(m_{11} + m_{00})\big(\frac{-m_{00}}{m_{00} + m_{11}} + \frac{k_0}{2k_1}(\oline\delta - \delta')\big) + m_{00}] \times \\ &\qquad \int_{x:\oline\delta - (\oline\delta - \delta')\leq \eta(x)\leq \oline\delta - \frac{k_0}{2k_1}(\oline\delta - \delta')}\!\!\!\!\!\!\!\!\!\!\!\! \df \\
    &=[(m_{11} + m_{00}) \frac{k_0}{2k_1}(\oline\delta - \delta')] \times \\
    &\qquad \Pmbb[\oline\delta - (\oline\delta - \delta')\leq \eta(x)\leq \oline\delta - \frac{k_0}{2k_1}(\oline\delta - \delta')]\\
    &\geq \tfrac{k_0}2(\oline\delta-\delta') \cdot \tfrac{k_0}{2k_1}(\oline\delta-\delta')=\frac{k_0^2}{4k_1}(\oline\delta-\delta')^2.
\end{align*}
\normalsize
Similar results hold when $\delta'>\oline\delta$. 
Therefore, if we have $|\phi(\oline C)-\phi(C(\delta'))|<\epsilon_\Omega$, then we must have $|\oline\delta-\delta'|<\frac 2{k_0}\sqrt{
k_1\epsilon_\Omega}$. Thus, if we are in a regime where the oracle
is mis-reporting the preference ordering, it must be the case that
the thresholds are sufficiently close to the optimal threshold.

Again, as in the proof of Theorem~\ref{thm:quasi}, when the tolerance $\epsilon$ is small, our binary search closes in on a parameter $\theta'$ which has $\phi(C_{\delta_{\theta'}})$ within $\epsilon_\Omega$ of the optimum, but from the above discussion, this also implies that the search interval itself is close to the true value, and thus, the total error in the threshold is at most $\epsilon + \frac 2{k_0}\sqrt{k_1\epsilon_\Omega}$. Since $\oline\delta = m_{00}/(m_{11}+m_{00})$, this bound extends to the cost vector with a factor of $\sqrt2$, thus giving the desired result.

We observe that the above theorem actually provide bounds on the slope of the hyperplanes. Thus, the guarantees for LFPM elicitation follow naturally. It only requires that we recover the slope at the upper boundary and lower boundary correctly (within some bounds). This theorem provides those guarantees.   Algorithm~\ref{alg:grid-search} is independent of oracle queries and thus can be run with high precision, making the solutions of the two systems match.  
\end{proof}
\pb
\begin{proof}[Proof of Lemma~\ref{lem:sample-Cs-optimize-well}] \textit{``Let $h_{\theta}$ and $\hat h_{\theta}$ be two classifiers estimated using $\eta$ and $\hat\eta$, respectively. 
Further, let ${\oline{\theta}}$ be such that $h_{{\oline{\theta}}} = \argmax_{\theta}\phi(h_{\theta})$. Then
	${\Vert C(\hat h_{{\oline{\theta}}}) - C(h_{{\oline{\theta}}}) \Vert_\infty=O( \Vert {\hat\eta}_n-\eta\Vert_\infty})$.'' }
	
Suppose the performance metric of the oracle is characterized by the parameter $\oline\theta$. Recall the Bayes optimal classifier would be $h_{\oline{\theta}} = \1 [\eta\geq \oline{\delta}]$. Let us assume we are given a classifier $\hhat_{\oline{\theta}} = \1 [\hat\eta\geq \oline{\delta}]$. Notice that the optimal threshold $\oline\delta$ is the property of the metric and not the classifier or $\eta$. We want to bound the difference in the confusion matrices for these two classifiers. Notice that, by  Assumption~\ref{as:sup-norm-convergence}, we can take $n$ sufficiently large so that $\Vert \eta-\hat\eta_n\Vert_\infty$ is arbitrarily small. Consider the quantity
\begin{ceqn}
\begin{align*}
	TP(h_{\oline{\theta}}) - TP(\hat h_{\oline{\theta}}) &= \int_{\eta\geq \oline{\delta}} \!\!\!\!\!\!\!\eta \df  -̥
    \int_{\hat\eta\geq \oline{\delta}} \!\!\!\!\!\!\!\eta \df.
\end{align*}
\end{ceqn}
Now the maximum loss in the above quantity can occur when, in the region where the classifiers' predictions differ, there $\hat\eta$ is less than $\eta$ with the maximum possible difference. This is equal to
\begin{align*}
    &\int\limits_{x:\oline\delta \leq \eta(x) \leq \oline\delta + \Vert\eta - \hat\eta\Vert_\infty} \!\!\!\!\!\!\!\eta \df \\
    &\leq \Pmbb[\oline\delta \leq \eta(X) \leq \oline\delta + \Vert\eta - \hat\eta\Vert_\infty] \\
    &\leq k_1\Vert\eta - \hat\eta\Vert_\infty. \qquad \text{(by Assumpition~\ref{as:low-weight-around-opt})}
\end{align*}

Similarly, we can look at the maximum gain in the following quantity.

\begin{align*}
	 TP(\hat h_{\oline{\theta}}) - TP(h_{\oline{\theta}}) &= 
    \int_{\hat\eta\geq \oline{\delta}} \!\!\!\!\!\!\!\eta \df - 
    \int_{\eta\geq \oline{\delta}} \!\!\!\!\!\!\!\eta \df  
    \label{eq:gain}
\end{align*}

Now the maximum gain in the above quantity can occur when, in the region where the classifiers' predictions differ, there $\hat\eta$ is greater than $\eta$ with the maximum possible difference. This is equal to
\begin{ceqn}
\begin{align*}
    &\int\limits_{x: \oline\delta - \Vert\eta - \hat\eta\Vert_\infty\leq\eta(x)\leq \oline\delta} \!\!\!\!\!\!\!\eta \df \\
    &\leq \Pmbb[\oline\delta - \Vert\eta - \hat\eta\Vert_\infty\leq \eta(X) \leq \oline\delta] \\
    &\leq k_1\Vert\eta - \hat\eta\Vert_\infty. \qquad \text{(by Assumpition~\ref{as:low-weight-around-opt})}
\end{align*}
\end{ceqn}
Hence, $$|TP(\hat h_{\oline{\theta}}) - TP(h_{\oline{\theta}})| \leq k_1\Vert\eta - \hat\eta\Vert_\infty.$$

Similar arguments apply for $TN$, which gives us the desired result.
\end{proof}
\pb

\section{Monotonically Decreasing Case}\label{appendix:decreasing}

Even if the oracle's metric is monotonically decreasing in \emph{TP} and \emph{TN}, we can figure out the supporting hyperplanes at the maximizer and the minimizer. It would require to pose one query $\Omega(C^*_{\pi/4}, C^*_{5\pi/4})$.  The response from this query determines whether we want to search over $\partial\Ccal_+$ or $\partial\Ccal_-$ and apply Algorithms \ref{alg:linear} and \hyperlink{alg:quasiconvex}{2} accordingly. In fact, if $C^*_{\pi/4} \prec C^*_{5\pi/4}$, then the metric is monotonically decreasing, and we search for the maximizer on the lower boundary $\partial\Ccal_-$. Similarly if the converse holds, then we search over $\partial\Ccal_+$ as discussed in the main paper. 
\end{appendices}
	
\end{document}